\documentclass{article}
\usepackage[square,sort,comma,numbers]{natbib}
\usepackage[preprint]{neurips_2026}

\usepackage{microtype}
\usepackage{hyperref}
\usepackage{url}
\usepackage{booktabs}
\usepackage{amsmath}

\usepackage{xspace}
\usepackage[utf8]{inputenc}
\usepackage[T1]{fontenc}
\usepackage{hyperref}
\usepackage{url}
\usepackage{booktabs}
\usepackage{amsfonts}
\usepackage{amsmath}
\usepackage{amssymb}
\usepackage{amsthm}
\usepackage{nicefrac}
\usepackage{microtype}
\usepackage{xcolor}
\usepackage{graphicx}
\usepackage{algorithm}
\usepackage{algorithmic}
\usepackage{multirow}
\usepackage{subcaption}
\usepackage{enumitem}
\usepackage{mathtools}
\usepackage{bbm}
\usepackage{dsfont}
\usepackage[table]{xcolor}

\newtheorem{theorem}{Theorem}

\newtheorem{corollary}[theorem]{Corollary}
\newtheorem{proposition}{Proposition}
\newtheorem{assumption}{Assumption}

\newtheorem{remark}{Remark}

\newcommand*{\inlineequation}[2][]{%
  \begingroup
    \refstepcounter{equation}%
    \ifx\\#1\\%
    \else
      \label{#1}%
    \fi
    \relpenalty=10000 %
    \binoppenalty=10000 %
    \ensuremath{%
      #2%
    }%
    ~\@eqnnum
  \endgroup
}
\usepackage{lineno}

\definecolor{darkblue}{rgb}{0, 0, 0.5}
\hypersetup{colorlinks=true, citecolor=darkblue, linkcolor=darkblue, urlcolor=darkblue}

\newcommand{\cg}{\textsc{CausalGuard}\xspace}

\title{\cg{}: Conformal Inference under Graph Uncertainty}

\author{%
  \textbf{Vikash Singh\textsuperscript{1}} \quad
  \textbf{Weicong Chen\textsuperscript{1}} \quad
  \textbf{Debargha Ganguly\textsuperscript{1}} \quad
  \textbf{Yanyan Zhang\textsuperscript{1}} \\
  \textbf{Nengbo Wang\textsuperscript{1}} \quad
  \textbf{Sreehari Sankar\textsuperscript{1}} \quad
  \textbf{Mohsen Hariri\textsuperscript{1}} \quad
  \textbf{Alexander Nemecek\textsuperscript{1}} \\
  \textbf{Chaoda Song\textsuperscript{1}} \quad
  \textbf{Shouren Wang\textsuperscript{1}} \quad
  \textbf{Biyao Zhang\textsuperscript{1}} \quad
  \textbf{Van Yang\textsuperscript{1}} \\
  \textbf{Erman Ayday\textsuperscript{1}} \quad
  \textbf{Jing Ma\textsuperscript{1}} \quad
  \textbf{Vipin Chaudhary\textsuperscript{1}} \\[0.3em]
  \textsuperscript{1}Case Western Reserve University \\[0.3em]
  \texttt{vikash@case.edu}
}

\begin{document}

\maketitle
\begin{abstract}
Estimating treatment effects from observational data requires choosing an adjustment set, but valid adjustment depends on an unknown causal graph. Graph misspecification can cause under-coverage, while graph-agnostic conformal wrappers may regain nominal coverage only through large padding. We introduce \cg{}, a structure-weighted conformal framework that calibrates after aggregating graph-conditional doubly robust pseudo-outcomes. Candidate DAGs are proposed from an LLM-derived edge prior, pruned by conditional-independence tests, and reweighted by Bayesian Information Criterion. A composite nonconformity score then calibrates the posterior-weighted pseudo-outcome. \cg{} provides distribution-free finite-sample marginal coverage for this aggregated pseudo-outcome; under causal identification, overlap, conditional-mean nuisance stability, and concentration on target-aligned valid adjustment strategies, its conditional mean converges to the true Conditional Average Treatment Effect. Across five benchmarks, \cg{} attains mean coverage above the nominal 90\% level for the directly evaluable target and reduces width when graph-agnostic conformal baselines require large padding. Stress tests show that \cg{} suppresses invalid collider adjustment and remains stable under misspecified priors when the retained candidate set is data-supported.

\end{abstract}

\section{Introduction}

\label{sec:intro}

Estimating individualized treatment effects from observational data requires choosing an adjustment set that blocks confounding. That choice depends on the underlying causal Directed Acyclic Graph (DAG), which is rarely known in practice~\citep{pearl2009causality}. Modern heterogeneous-effect estimators such as doubly robust learners~\citep{chernozhukov2018double} and causal forests~\citep{wager2018estimation} can be statistically efficient once a valid graph is specified, but they are brittle to structural misspecification: omitting confounders destroys identification, while adjusting for colliders or treatment descendants can induce additional bias~\citep{spirtes2000causation}.

Conformal inference is a natural response to this uncertainty because it provides finite-sample marginal coverage without distributional assumptions~\citep{lei2021conformal,romano2019conformalized}. However, under causal graph uncertainty, graph-agnostic conformal wrappers can preserve marginal coverage for their observable calibration target; but if that target is based on an invalid adjustment set, the resulting interval need not provide formal coverage for the causal estimand.

This work addresses that limitation through \emph{structure-weighted conformal inference}, instantiated in our method \cg{}. Instead of committing to a single graph, \cg{} constructs an ensemble of plausible DAGs, fits graph-conditional doubly robust estimators under each corresponding adjustment set, and aggregates and conformalizes the resulting predictions. The central object is a composite nonconformity score defined on a posterior-weighted pseudo-outcome, which transfers structural uncertainty directly into the conformal calibration step.

The resulting finite-sample guarantee applies to a structure-weighted pseudo-outcome, not directly to the true CATE. Under causal identification and overlap, conditional-mean nuisance stability, and score concentration on target-aligned valid adjustment strategies, the conditional mean of this target converges to the true CATE. In the implemented framework, candidate graphs are proposed by an LLM-derived edge prior, pruned by conditional-independence tests, and reweighted by Bayesian Information Criterion (BIC)~\citep{schwarz1978estimating}. 

This formulation also resolves a key practical tension. In finite samples, a semantic prior can regularize the graph search toward plausible adjustment sets. As the sample size increases, the data likelihood can dominate and the prior can wash out. The method is designed to exhibit exactly this behavior: the prior organizes finite-sample graph search while the final interval becomes increasingly data-driven when the likelihood is informative.

We summarize the main contributions as follows:

\begin{itemize}[nosep,leftmargin=1.3em]

    \item \textbf{Structure-weighted conformal inference.} A conformal construction for treatment-effect intervals under graph uncertainty that aggregates graph-conditional doubly robust pseudo-outcomes through a posterior-weighted composite score.

    \item \textbf{Aggregate-before-calibrate scoring.} A composite score that conformalizes the posterior-weighted target directly, avoiding the Jensen gap that arises when graph-specific conformity scores or intervals are averaged after calibration.

    \item \textbf{Graph-ensemble construction.} A practical procedure combining LLM-derived edge proposals, conditional-independence pruning, and BIC reweighting, in which the prior acts as a proposal distribution rather than a source of guaranteed causal truth.

    \item \textbf{Finite-sample validity and asymptotic target consistency.} Distribution-free finite-sample marginal coverage for the weighted pseudo-outcome (Theorem~\ref{thm:coverage}) and conditional-mean convergence of this target to the true CATE under causal identification, overlap, conditional-mean nuisance stability, and target-aligned score-concentration conditions (Corollary~\ref{cor:target_consistency}).

    \item \textbf{Empirical validation and boundary analysis.} Across five benchmarks and 10 random seeds per dataset, \cg attains mean coverage above the nominal 90\% target for the directly evaluable target. Stress tests show that graph-aware adjustment suppresses invalid collider adjustment, while low-data diagnostics show when graph-aware conformalization is substantially tighter than graph-agnostic conformal DR.

\end{itemize}

\section{Related Work}
\label{sec:related_work}


\textbf{Conformal inference for treatment effects.}
Conformal prediction provides finite-sample marginal coverage with minimal distributional assumptions~\citep{vovk2005algorithmic,lei2018distribution}. Recent causal extensions include conformal intervals for counterfactuals and individualized treatment effects~\citep{lei2021conformal,alaa2023conformal}, conformalized quantile regression~\citep{romano2019conformalized}, covariate-shift adjustments~\citep{tibshirani2019conformal}, and sensitivity analyses for hidden confounding~\citep{jin2023sensitivity}. These methods assume the adjustment set is fixed before calibration. Under graph uncertainty, both the pseudo-outcome and the conformity-score distribution depend on the graph. Running conformal calibration separately for each candidate graph and averaging afterward targets a different, typically wider object. \cg{} instead aggregates graph-conditional pseudo-outcomes and bounds first, then calibrates the resulting composite score.

\textbf{Causal estimation with uncertain structure.}
Standard heterogeneous-effect estimators such as Double Machine Learning~\citep{chernozhukov2018double} and Causal Forests~\citep{wager2018estimation} achieve strong asymptotic performance when a valid adjustment set is known. Under graph misspecification, colliders or treatment descendants can be inadvertently included in the adjustment set, degrading finite-sample performance~\citep{pearl2009causality}. \cg{} instead treats the adjustment set as uncertain and propagates that uncertainty through the interval construction.
Related work on covariate and adjustment-set selection studies how to choose valid or efficient adjustment variables from graphical or statistical criteria~\citep{entner2013data,henckel2022graphical}. That line of work typically returns a single selected adjustment strategy. Our focus is complementary: when several adjustment strategies remain plausible, we carry their uncertainty through doubly robust pseudo-outcomes and conformal calibration rather than collapsing it before inference.

\textbf{Causal discovery and LLM-derived priors.}
Classical discovery methods such as PC, GES, and continuous DAG optimization estimate a DAG or Markov equivalence class from observational data alone~\citep{spirtes2000causation,chickering2002optimal,zheng2018dags}. Recent work studies whether LLMs can contribute causal priors, edge orientations, or graph refinement heuristics~\citep{kiciman2023causal,long2023causal,ban2023query,jiralerspong2024efficient,sgouritsa2024prompting}. These works primarily evaluate graph recovery or causal-reasoning accuracy. More broadly, uncertainty-aware and verification-guided LLM systems show that model confidence should be treated as task-dependent evidence rather than as an oracle signal~\citep{ganguly2025grammars,singh2026verge}. \cg{} uses LLM-derived edge probabilities more conservatively: they define a proposal distribution over plausible DAGs, and their value is assessed only through downstream interval estimation after BIC reweighting and conformal calibration.

\textbf{Model averaging and graph ensembles.}
Bayesian model averaging hedges against model uncertainty by integrating predictions over a posterior distribution on candidate models~\citep{madigan1994model,hoeting1999bayesian}. Related ideas appear in causal inference through Bayesian treatment-effect models and effect aggregation across equivalence classes~\citep{hill2011bayesian,maathuis2009estimating}. These approaches typically provide posterior or asymptotic uncertainty statements. In contrast, \cg{} uses graph weights to define the object that is subsequently calibrated by split conformal inference, yielding distribution-free finite-sample marginal coverage for the aggregated target under exchangeability. The key distinction is therefore the calibration target: \cg{} first induces a structure-weighted pseudo-outcome through graph averaging, and then applies distribution-free calibration to that aggregated target.

\section{Methodology}
\label{sec:methodology}

An overview of the \cg{} pipeline is shown in Figure~\ref{fig:architecture}. The method proceeds in four stages: structural proposal, graph weighting, graph-conditional estimation, and structure-weighted conformal calibration. The pseudocode for the full procedure is provided in Appendix~\ref{app:algorithm}.

\begin{figure}[htbp]
    \centering
    \includegraphics[width=\textwidth]{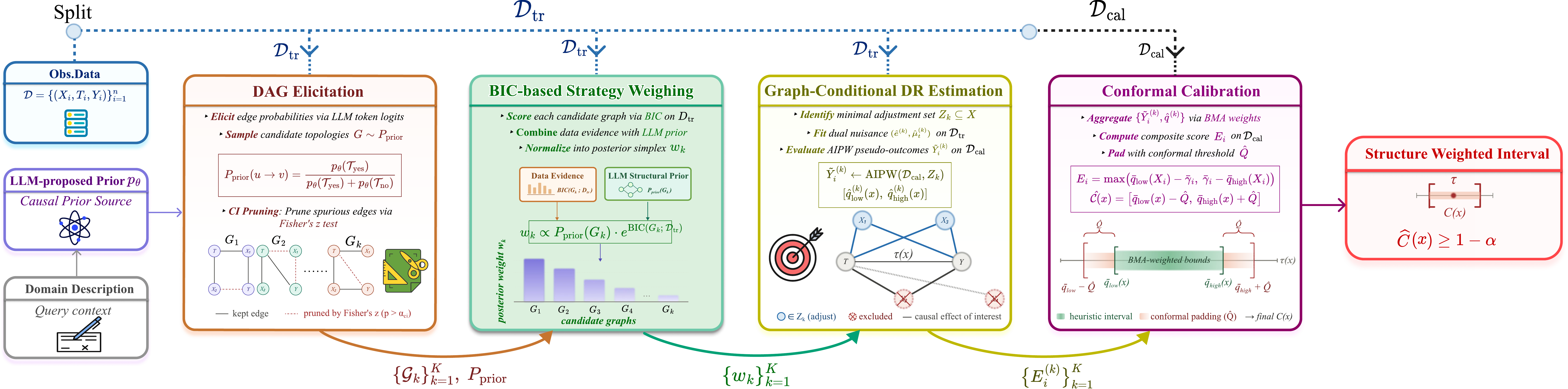}
    \caption{\textbf{The \cg pipeline.} Candidate DAGs are proposed from an LLM-derived edge prior, pruned by CI tests on $\mathcal{D}_{\rm train}$, scored by BIC plus a bounded structural prior, and collapsed to unique adjustment strategies before weight normalization. Graph-conditional doubly robust pseudo-outcomes are then aggregated into $\bar{\gamma}$, and a composite split-conformal score calibrated on $\mathcal{D}_{\rm cal}$ produces $\widehat C(x)$. The distribution-free finite-sample guarantee is for the structure-weighted pseudo-outcome $\bar{\gamma}$; the connection to $\tau(x)$ is through conditional-mean target consistency under additional assumptions.
    }
    \label{fig:architecture}
\end{figure}

The analysis is conducted in the Neyman--Rubin potential-outcomes framework~\citep{neyman1923application,rubin1974estimating}. Let $X=(X_1,\ldots,X_d)\in\mathbb{R}^d$ denote observed pre-treatment covariates, $T\in\{0,1\}$ a binary treatment, and $Y\in\mathbb{R}$ the observed outcome. Under the Stable Unit Treatment Value Assumption~\citep{rubin1980randomization}, each unit has potential outcomes $Y(1)$ and $Y(0)$, and the observed outcome satisfies $Y = T Y(1) + (1-T)Y(0)$. The target estimand is the Conditional Average Treatment Effect (CATE),
\begin{equation}
  \label{eq:cate}
  \tau(x) \;=\; \mathbb{E}\bigl[Y(1) - Y(0) \mid X = x\bigr].
\end{equation}
The goal is to construct an interval $\hat{C}(x) = [\hat{L}(x), \hat{U}(x)]$ that remains informative even when the true causal DAG $\mathcal{G}^*$ is unknown. The challenge is that identification of $\tau(x)$ depends on choosing a valid adjustment set, and that choice is itself graph dependent.

Let $V = \{T, Y, X_1, \ldots, X_d\}$ denote the full variable set. Each candidate DAG is written as $G_k = (V, E_k)$, where $E_k$ is its directed edge set; for any graph $G$ on $V$, $\mathrm{Pa}_{G}(v)$ denotes the parent set of node $v$. The observed sample is split into disjoint training and calibration subsets, $\mathcal{D}_{\text{train}}$ and $\mathcal{D}_{\text{cal}}$. All graph weights, nuisance models, and preliminary bounds are estimated on $\mathcal{D}_{\text{train}}$, while conformal scores are evaluated on $\mathcal{D}_{\text{cal}}$.

\subsection{Probabilistic Structural Prior Elicitation.}
\label{sec:llm_priors}
Because observational causal discovery is only partially identifiable without strong assumptions~\citep{spirtes2000causation,pearl2009causality}, a proposal distribution over edges is constructed. For each ordered pair $(u,v) \in V \times V$, a pre-trained autoregressive language model $p_\theta$ is queried about the existence of the directed edge $u \to v$. Let $\mathcal{T}_{\mathrm{yes}}$ and $\mathcal{T}_{\mathrm{no}}$ denote token sets corresponding to positive and negative answers, and let $\text{query}_{u,v}$ denote the prompt asking whether $u$ is a direct cause of $v$. The prior edge probability is defined as
\begin{equation}
  \label{eq:logprob}
  P_{\text{prior}}(u \to v)
  \;=\;
  \frac{
    \sum_{t \in \mathcal{T}_{\text{yes}}} p_\theta(t \mid \text{query}_{u,v})
  }{
    \sum_{t \in \mathcal{T}_{\text{yes}}} p_\theta(t \mid \text{query}_{u,v})
    \;+
    \sum_{t \in \mathcal{T}_{\text{no}}} p_\theta(t \mid \text{query}_{u,v})
  }.
\end{equation}
All probabilities are clipped to $[\epsilon, 1-\epsilon]$ for a small constant $\epsilon > 0$ to ensure numerical stability. Importantly, $P_{\text{prior}}$ serves only as a proposal mechanism over candidate graphs and does not define the inferential target.

\subsection{DAG Sampling, Statistical Pruning, and Bayesian Weighting.}
\label{sec:dag_sampling}


Given $P_{\text{prior}}$, a finite ensemble of $K$ candidate DAGs is instantiated. For each graph $G_k$, a topological order $\pi_k$ over $V$ is first sampled subject to temporal metadata. By default, benchmark covariates are treated as baseline variables satisfying $X_j \prec T \prec Y$ for all $j$; variables marked as post-treatment are ordered after treatment and are excluded from admissible adjustment sets. Edges among variables with the same temporal status remain ordered according to $\pi_k$. Each admissible forward edge $u \to v$ with $\pi_k(u) < \pi_k(v)$ is then sampled independently from a Bernoulli distribution with parameter $P_{\text{prior}}(u \to v)$, which guarantees acyclicity by construction~\citep{friedman2003being}.

Edges are subsequently pruned using conditional-independence checks. For each sampled graph $G_k$ and edge $u \to v \in E_k$, let $S = \mathrm{Pa}_{G_k}(v) \setminus \{u\}$ denote the remaining parents of $v$. The edge is tested for conditional independence using a standard partial-correlation test (based on the Fisher-transformed statistic~\citep{fisher1921probable}) at significance level $\alpha_{\mathrm{CI}}$, and edges for which independence is not rejected are removed. This step acts only as a proposal-stage filter prior to weighting, not as a standalone discovery procedure.

Residual structural uncertainty is represented through posterior-like weights over the pruned candidate graphs, followed by a collapse to unique adjustment strategies. Let $\mathcal{A}_k$ denote the set of admissible directed pairs induced by this temporally constrained order $\pi_k$, i.e., all $(u,v)$ such that $\pi_k(u) < \pi_k(v)$ and the temporal constraints above are respected, excluding self-loops. For each candidate $G_k$, a Bayesian Information Criterion (BIC) score~\citep{schwarz1978estimating} is combined with the edge-wise Bernoulli prior:
\begin{equation}
  \label{eq:bma_weight}
  \ln \tilde{w}_k
  \;=\;
  \underbrace{\mathrm{BIC}(G_k \mid \mathcal{D}_{\text{train}})}_{\text{Data Evidence}}
  +
  \underbrace{
    \textstyle\sum_{(u,v)\in E_k} \ln P_{\text{prior}}(u \to v)
    +
    \textstyle\sum_{(u,v) \in \mathcal{A}_k \setminus E_k} \ln \bigl(1 - P_{\text{prior}}(u \to v)\bigr)
  }_{\text{Bernoulli Structural Prior}}.
\end{equation}
After CI pruning and adjustment-set extraction, duplicate selected adjustment strategies are collapsed before normalization. Let $\mathcal{Z}$ denote the set of unique selected adjustment sets among surviving graphs. For each $Z\in\mathcal{Z}$,
\begin{equation}
  \label{eq:adjustment_collapse}
  \tilde{w}_{Z}
  =
  \textstyle\sum_{k: Z_k=Z}\exp(\ln \tilde{w}_k),
  \qquad
  w_Z
  =
  \tilde{w}_{Z}\big/\sum_{Z'\in\mathcal{Z}}\tilde{w}_{Z'} .
\end{equation}

\subsection{Graph-Conditional Doubly Robust Estimation.}
\label{sec:dr_estimation}
For each graph $G_k$, let $Z_k \subseteq \{X_1,\ldots,X_d\}$ denote a backdoor adjustment set implied by $G_k$, i.e., valid under $G_k$ via the backdoor criterion~\citep{pearl2009causality}. When multiple valid sets are returned, the implementation uses a minimum-cardinality valid set; graphs for which no admissible adjustment set is found are excluded before duplicate-strategy collapse and weight normalization. Standard doubly robust nuisance models are fitted on $\mathcal{D}_{\text{train}}$ to obtain the Augmented Inverse Probability Weighting (AIPW) pseudo-outcomes~\citep{robins1994estimation, chernozhukov2018double} and graph-conditional heuristic bounds $[\hat{q}_{\text{low}}^{(k)}, \hat{q}_{\text{high}}^{(k)}]$, interpreted as lower and upper prediction functions, not necessarily valid confidence bounds, for those pseudo-outcomes. For a unit with covariates $X_i$, write $Z_{i,k}$ for the subvector of $X_i$ restricted to the variables in $Z_k$. The precise $\mathcal{D}_{\text{train}}$ / $\mathcal{D}_{\text{cal}}$ split, nuisance-fitting details, and implementation choices are deferred to Appendix~\ref{app:data_sep} and Appendix~\ref{app:adjustment_sets}.

For calibration, the AIPW pseudo-outcome is evaluated on $\mathcal{D}_{\text{cal}}$:
\begin{equation}
  \label{eq:aipw}
  \tilde{Y}_i^{(k)}
  \;=\;
  \hat{\mu}_1^{(k)}(Z_{i,k}) - \hat{\mu}_0^{(k)}(Z_{i,k})
  \;+
  \frac{T_i\bigl(Y_i - \hat{\mu}_1^{(k)}(Z_{i,k})\bigr)}{\hat{e}^{(k)}(Z_{i,k})}
  \;-
  \frac{(1 - T_i)\bigl(Y_i - \hat{\mu}_0^{(k)}(Z_{i,k})\bigr)}{1 - \hat{e}^{(k)}(Z_{i,k})}.
\end{equation}
Here $\hat{e}^{(k)}$ is the propensity score and $\hat{\mu}_t^{(k)}$ are outcome regressions. This stage produces graph-conditional pseudo-outcomes that can be aggregated under structural uncertainty.

\subsection{Structure-Weighted Conformal Calibration.}
\label{sec:conformal}
The conformal step is where graph uncertainty enters the interval construction. Because all graph weights, nuisance models, and heuristic bounds are computed using $\mathcal{D}_{\text{train}}$ only, the calibration scores and the future test score are exchangeable conditional on $\mathcal{D}_{\text{train}}$.

Conceptually, the pipeline separates two roles. Structure weighting attempts to reduce bias from uncertain adjustment choices by averaging over data-supported adjustment strategies before calibration, while conformal calibration accounts for the remaining finite-sample uncertainty in the aggregated estimator. This separation is useful when the retained ensemble contains target-aligned valid adjustment strategies; otherwise, the conformal guarantee still applies to the aggregated pseudo-outcome but does not by itself repair causal misspecification.

For each calibration point $i$ and graph $k$, define the graph-conditional nonconformity score
\begin{equation}
  \label{eq:score}
  E_i^{(k)}
  \;=\;
  \max\;\Bigl(
    \hat{q}_{\text{low}}^{(k)}(X_i) - \tilde{Y}_i^{(k)},\;\;
    \tilde{Y}_i^{(k)} - \hat{q}_{\text{high}}^{(k)}(X_i)
  \Bigr).
\end{equation}

This graph-specific score is introduced for exposition; the final conformal calibration is performed using the aggregated score defined below.
Under graph uncertainty, no single adjustment set is privileged \emph{a priori}. The Bayesian Model Averaging (BMA)-weighted pseudo-outcome~\citep{hoeting1999bayesian} therefore defines the natural statistical object induced by the posterior over admissible adjustment strategies. Rather than selecting a single graph and treating alternatives as noise, the method integrates over graph-conditioned pseudo-outcomes supported by both prior knowledge and observed data.
Equivalently, the posterior-weighted pseudo-outcome $\bar{\gamma}$ (defined below) can be interpreted as a surrogate estimand under structural uncertainty, analogous to BMA over adjustment sets. This aggregated object serves as the finite-sample target of the conformal guarantee, while $\tau$ remains the asymptotic causal target.

Define the aggregated pseudo-outcome and aggregated bounds as
\[
\bar{\gamma}_i = \textstyle\sum_{k=1}^K w_k \tilde{Y}_i^{(k)},\qquad
\bar{q}_{\text{low}}(X_i) = \textstyle\sum_k w_k \hat{q}_{\text{low}}^{(k)}(X_i),\qquad
\bar{q}_{\text{high}}(X_i) = \textstyle\sum_k w_k \hat{q}_{\text{high}}^{(k)}(X_i).
\]

This order of operations is central. BMA alone gives a posterior-weighted prediction without finite-sample frequentist validity, while conformalizing a single graph-conditioned estimator ignores adjustment-set uncertainty. \cg{} instead uses graph weighting to define the calibration target and split conformal inference to calibrate that target, changing the object being calibrated under graph uncertainty.
The composite nonconformity score is
\begin{equation}
\label{eq:composite_score}
  E_i
  \;=\;
  \max\;\Bigl(
    \bar{q}_{\text{low}}(X_i) - \bar{\gamma}_i,\;\;
    \bar{\gamma}_i - \bar{q}_{\text{high}}(X_i)
  \Bigr).
\end{equation}

\paragraph{Why aggregate before calibration?}
The following proposition formalizes the key efficiency rationale for directly conformalizing the posterior-weighted target.

\begin{proposition}[Aggregate-before-calibrate domination]
\label{prop:aggregate_before_calibrate}
Fix the training split, graph weights, graph-conditional pseudo-outcomes, and graph-conditional bounds. Define the graph-wise averaged score
\begin{equation}
\label{eq:jensen_score}
E_i^{\dagger}
=
\textstyle\sum_{k=1}^K w_k
\max\!\left(
\hat q_{\mathrm{low}}^{(k)}(X_i)-\tilde Y_i^{(k)},
\tilde Y_i^{(k)}-\hat q_{\mathrm{high}}^{(k)}(X_i)
\right).
\end{equation}
Then $E_i\le E_i^\dagger$ for every calibration point $i$. Hence the split-conformal quantile based on the aggregated score $\{E_i\}$ is no larger than the quantile based on $\{E_i^\dagger\}$ on the same calibration set using the same quantile rule.
\end{proposition}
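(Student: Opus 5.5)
The inequality $E_i \le E_i^\dagger$ is a convexity (Jensen-type) statement. For fixed $i$, define the function $\phi_i:\mathbb{R}^K\times\mathbb{R}^K\times\mathbb{R}^K\to\mathbb{R}$ by
\[
\phi_i(a,b,c)=\max\Bigl(\textstyle\sum_k w_k a_k-\sum_k w_k c_k,\;\sum_k w_k c_k-\sum_k w_k b_k\Bigr),
\]
with $a_k=\hat q_{\mathrm{low}}^{(k)}(X_i)$, $b_k=\hat q_{\mathrm{high}}^{(k)}(X_i)$ and $c_k=\tilde Y_i^{(k)}$. By linearity of the aggregation in the definitions of $\bar\gamma_i$, $\bar q_{\mathrm{low}}$ and $\bar q_{\mathrm{high}}$, the two arguments of the max in \eqref{eq:composite_score} are
\[
\sum_k w_k\bigl(a_k-c_k\bigr)\quad\text{and}\quad\sum_k w_k\bigl(c_k-b_k\bigr).
\]
Write $u_k=a_k-c_k$ and $v_k=c_k-b_k$. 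Then $E_i=\max\bigl(\sum_k w_k u_k,\sum_k w_k v_k\bigr)$ and $E_i^\dagger=\sum_k w_k\max(u_k,v_k)$.

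The key step is the elementary inequality $\max(\sum_k w_k u_k,\sum_k w_k v_k)\le \sum_k w_k \max(u_k,v_k)$. This holds because $w_k\ge 0$ and $\max(u_k,v_k)\ge u_k$ give $\sum_k w_k u_k\le\sum_k w_k\max(u_k,v_k)$. The same argument bounds the $v$-sum. Taking the maximum of the two bounds yields $E_i\le E_i^\dagger$. Equivalently, this is Jensen's inequality for the convex map $(u,v)\mapsto\max(u,v)$ under the probability weights $w$. The only facts needed are nonnegativity of the weights (they are normalized exponentials, so strictly positive) and that the same weights index both scores. One bookkeeping point needs care. The weights are indexed by unique adjustment strategies $Z$ after the collapse in \eqref{eq:adjustment_collapse}, while the displayed sums run over $k$. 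I would note that, since graphs sharing $Z$ yield identical pseudo-outcomes and bounds, either indexing gives the same sums. Normalization $\sum w=1$ is not even required for the inequality, only $w\ge0$.

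For the quantile claim, I would invoke monotonicity of order statistics. The split-conformal quantile with rule ``the $\lceil (n+1)(1-\alpha)\rceil$-th smallest of the $n$ calibration scores'' is a coordinatewise nondecreasing function of the score vector. If $E_i\le E_i^\dagger$ for all $i$, then for every threshold $t$ we have $\#\{i:E_i\le t\}\ge\#\{i:E_i^\dagger\le t\}$. Hence the $r$-th order statistic satisfies $E_{(r)}\le E^\dagger_{(r)}$. The same conclusion holds for any quantile rule defined by an order-statistic index, including the $+\infty$ convention when the index exceeds $n$, since then both quantiles equal $+\infty$.

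There is no substantial obstacle. The one place to be careful is the claim that the quantile rule is monotone. I would state explicitly that ``the same quantile rule'' means a fixed order-statistic index depending only on $n$ and $\alpha$, and then prove $E_{(r)}\le E^\dagger_{(r)}$ via the counting argument above, rather than through a sorting permutation.
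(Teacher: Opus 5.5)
Your proposal is correct and follows the paper's proof in substance. The paper also writes the two score components as per-graph differences $a_{ik}, b_{ik}$, applies convexity of $(a,b)\mapsto\max(a,b)$ under the weights $w_k$ to get $E_i\le E_i^\dagger$, and then appeals to monotonicity of empirical order statistics under pointwise domination. Your elementary derivation via $\max(u_k,v_k)\ge u_k$ and $w_k\ge 0$ (showing that normalization is unnecessary) and your counting argument for $E_{(r)}\le E^\dagger_{(r)}$ simply spell out the two steps the paper states in one line each.
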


\begin{proof}
Let $a_{ik}=\hat q_{\mathrm{low}}^{(k)}(X_i)-\tilde Y_i^{(k)}$,
$b_{ik}=\tilde Y_i^{(k)}-\hat q_{\mathrm{high}}^{(k)}(X_i)$; convexity of
$(a,b)\mapsto\max(a,b)$ gives
\begin{equation}
\label{eq:jensen_gap}
E_i=\max\!\left(\textstyle\sum_k w_k a_{ik},\textstyle\sum_k w_k b_{ik}\right)
\le \textstyle\sum_k w_k\max(a_{ik},b_{ik})=E_i^\dagger .
\end{equation}
The quantile statement follows from the monotonicity of empirical order statistics under pointwise domination.
\end{proof}

Proposition~\ref{prop:aggregate_before_calibrate} shows that the method does not simply average already padded graph-specific intervals. It calibrates the posterior-weighted target itself, avoiding the extra Jensen gap introduced by graph-wise calibration followed by averaging.

Let $\hat{Q}$ be the $\lceil (1-\alpha)(|\mathcal{D}_{\text{cal}}|+1) \rceil$-th smallest value among the calibration scores $\{E_i\}_{i \in \mathcal{D}_{\text{cal}}}$. The final interval for a test point $x$ is
\begin{equation}
  \label{eq:final_interval}
  \hat{C}(x)
  \;=\;
  \biggl[
    \bar{q}_{\text{low}}(x) - \hat{Q},
    \;\;
    \bar{q}_{\text{high}}(x) + \hat{Q}
  \biggr].
\end{equation}

\subsection{Uncertainty Decomposition and Theoretical Guarantees.}
\label{sec:decomposition}
The resulting interval reflects three sources of uncertainty: model uncertainty in the nuisance estimators, finite-sample calibration uncertainty through $\hat{Q}$, and structural uncertainty across graphs. Let $\hat{\tau}_k(x)$ denote the graph-conditional CATE point estimate under $G_k$. Structural uncertainty can then be quantified via the posterior dispersion of graph-conditional predictions,
\begin{equation}
  \label{eq:struct_unc}
  \sigma_{\text{struct}}(x)
  \;=\;
  \sqrt{\textstyle\sum_{k=1}^K w_k \bigl(\hat{\tau}_k(x) - \bar{\tau}(x)\bigr)^2},
\end{equation}
where $\bar{\tau}(x) = \sum_k w_k \hat{\tau}_k(x)$.

This quantity should be interpreted as estimator-level graph dispersion, not edge-level graph uncertainty. Distinct sampled DAGs may imply nearly identical adjustment sets or nearly identical AIPW pseudo-outcomes, in which case $\sigma_{\text{struct}}(x)$ can be small even when the graph ensemble is visually diverse. This distinction matters empirically: \cg is most useful when graph uncertainty propagates into materially different pseudo-outcomes before calibration, rather than when many graphs induce the same effective adjustment strategy.

The distribution-free finite-sample guarantee concerns the BMA-weighted pseudo-outcome $\bar{\gamma}_{n+1}$, not the unobserved CATE itself. This pseudo-outcome remains noisy even when the nuisance functions are correct. The causal object recovered asymptotically is its conditional mean,
\begin{equation}
\label{eq:method_target_mean}
m_n(x)
=
\mathbb{E}\!\left[\bar{\gamma}_{n+1}\mid X_{n+1}=x,\mathcal{D}_{\mathrm{train}}\right].
\end{equation}

Under exchangeability, Theorem~\ref{thm:coverage} establishes finite-sample marginal coverage $\mathbb{P}(\bar{\gamma}_{n+1} \in \hat{C}(X_{n+1})) \ge 1-\alpha$. Corollary~\ref{cor:target_consistency} then shows that $m_n$ converges to $\tau$ in $L_1(P_X)$ under causal identification and overlap, conditional-mean nuisance stability, and score concentration on target-aligned valid adjustment strategies. Thus, the interval is finite-sample valid for the aggregated pseudo-outcome, while the conditional mean of that target aligns asymptotically with the CATE under these assumptions. 
Appendix~\ref{app:coverage_targets} clarifies the distinction between finite-sample pseudo-outcome coverage and CATE diagnostics.

\section{Empirical Evaluation}
\label{sec:experiments}

The empirical study addresses three questions: \textbf{(RQ1)} whether graph-aware conformalization improves efficiency, \textbf{(RQ2)} whether the method behaves correctly when structure matters, and \textbf{(RQ3)} when the prior matters or washes out. On IHDP and Twins, where ground-truth $\tau$ is available, coverage is reported for the true CATE. On Sachs, Jobs, and ACIC2019, coverage is reported for the weighted pseudo-outcome, the distribution-free finite-sample target of Theorem~\ref{thm:coverage}; Appendix~\ref{app:coverage_targets} details this distinction.

\subsection{Setup and Main Benchmark Results}

\cg is evaluated on IHDP~\citep{hill2011bayesian}, Sachs~\citep{sachs2005causal}, ACIC2019~\citep{hahn2019atlanticcausalinferenceconference}, LaLonde Jobs~\citep{lalondejobs}, and Twins~\citep{louizos2017causal}. Table~\ref{tab:main_results} reports the main 10-seed result. Mean coverage exceeds the nominal 90\% target for the directly evaluable target on all five datasets. ACIC2019 has the largest seed variability, so we treat it as a sensitivity case rather than as uniform evidence of efficiency.

\begin{table}[htbp]
\small
\centering
\caption{\cg main results over 10 random seeds at target coverage $1-\alpha=90\%$. Coverage is evaluated against the directly available estimand for each dataset: true CATE on IHDP and Twins, and the structure-weighted pseudo-outcome on Sachs, ACIC2019, and Jobs.}
\label{tab:main_results}
\resizebox{\linewidth}{!}{%
\begin{tabular}{llccccc}
\toprule
\textbf{Dataset} & \textbf{Target evaluated} & \textbf{Cov.\ Mean} & \textbf{Cov.\ Std} & \textbf{Cov.\ Range} & \textbf{Width Mean} & \textbf{Width Std} \\
\midrule
IHDP     & true CATE & 99.94 & 0.18 & [99.40, 100.00] & 10.87   & 1.09 \\
Sachs    & weighted pseudo-outcome & 92.91 & 1.79 & [90.89, 95.70]  & 4.26    & 0.25 \\
ACIC2019 & weighted pseudo-outcome & 93.28 & 3.74 & [84.80, 100.00] & 4.37    & 2.21 \\
Jobs     & weighted pseudo-outcome & 93.48 & 2.80 & [89.29, 98.21]  & 36{,}936 & 4{,}687 \\
Twins    & true CATE & 97.12 & 0.08 & [96.99, 97.28]  & 0.181   & 0.003 \\
\bottomrule
\end{tabular}
}
\end{table}

The coverage ranges are also informative. IHDP and Twins are highly stable across seeds, suggesting that once the adjustment structure is sufficiently constrained, conformal calibration is conservative and reproducible. ACIC2019 has the widest range, so we treat it as a split-sensitive boundary case rather than as uniform evidence of efficiency.

\subsection{Efficiency Against Conformal Baselines}

To compare efficiency rather than coverage failure, the strongest graph-agnostic baselines are conformalized: Causal Forest~\citep{wager2018estimation} and a naive linear doubly robust estimator~\citep{chernozhukov2018double}. Non-conformal baselines are included only to show how severely standard intervals can under-cover when graph uncertainty is ignored. Table~\ref{tab:baseline_suite} reports 10-seed averages for \cg{} and the baseline methods, with all conformal methods calibrated at the same nominal level.
The non-conformal rows illustrate why width alone is not meaningful: several methods produce very narrow intervals, but their empirical coverage is far below the nominal target, so these intervals are not valid uncertainty statements.

\begin{table}[htbp]
\small
\centering
\caption{Baseline comparisons over 10 random seeds at target $90\%$. Coverage is shown where directly evaluable; ``---'' indicates no ground-truth $\tau$ or method failure. Widths compare interval efficiency at the same nominal calibration level; empirical coverage can still differ across methods.}
\label{tab:baseline_suite}
\begin{subtable}[t]{0.48\linewidth}
\centering
\caption{Empirical coverage (\%).}
\label{tab:baselines_coverage}
\resizebox{\linewidth}{!}{%
\begin{tabular}{lccccc}
\toprule
\textbf{Method} & \textbf{IHDP} & \textbf{Sachs} & \textbf{ACIC2019} & \textbf{Jobs} & \textbf{Twins} \\
\midrule
X-Learner                & 56.5  & --- & --- & --- & 49.3 \\
PC Algorithm + DR        & 32.7  & --- & --- & --- & --- \\
Conformal Causal Forest  & 100.0 & --- & --- & --- & 96.9 \\
Conformal Naive DR       & 100.0 & --- & --- & --- & 97.0 \\
\midrule
\textbf{\cg}             & 99.94 & 92.91 & 93.28 & 93.48 & 97.12 \\
\bottomrule
\end{tabular}
}
\end{subtable}
\hfill
\begin{subtable}[t]{0.49\linewidth}
\centering
\caption{Interval widths.}
\label{tab:baselines_all}
\resizebox{\linewidth}{!}{%
\begin{tabular}{lccccc}
\toprule
\textbf{Method} & \textbf{IHDP} & \textbf{Sachs} & \textbf{ACIC2019} & \textbf{Jobs} & \textbf{Twins} \\
\midrule
X-Learner                & 1.54  & 0.24 & 0.51 & 8{,}642  & 0.03 \\
PC Algorithm + DR        & 0.84  & 0.34 & 4.03 & 20{,}231 & --- \\
Conformal Causal Forest  & 18.03 & 2.48 & 3.86 & 74{,}875 & 0.15 \\
Conformal Naive DR       & 17.79 & 2.42 & 7.18 & 82{,}127 & 0.19 \\
\midrule
\textbf{\cg}             & 10.87 & 4.26 & 4.37 & 36{,}936 & 0.181 \\
\bottomrule
\end{tabular}
}
\end{subtable}
\end{table}

\cg is most beneficial when fixed-adjustment conformal baselines require large calibration padding. This is clearest on IHDP, where \cg reduces mean width from $17.79$--$18.03$ to $10.87$ while preserving coverage, and on Jobs, where all conformal methods remain wide but \cg is substantially tighter than the graph-agnostic conformal baselines. These results also identify regimes where structure weighting is not expected to dominate. On Sachs and Twins, graph-agnostic conformal methods can be competitive because the samples are comparatively clean or sufficiently large to enable strong nuisance estimation and localized tree splitting. This is consistent with the intended scope of \cg: it is most useful when uncertainty over adjustment choices, rather than only outcome noise or sample size, drives interval inflation.

\subsection{Mechanism Stress Test: Invalid Adjustment}
\label{subsec:collider_injection}

The central motivation for modeling graph uncertainty is not merely interval shrinkage, but avoiding harmful adjustment choices before conformal calibration. We therefore run a controlled collider-injection experiment on IHDP. We append a synthetic diagnostic variable $X_{\mathrm{col}}=0.3T+0.4Y+\varepsilon$, making it a post-treatment collider-like variable rather than an ordinary pre-treatment covariate.
For this stress test, temporal metadata identify $X_{\mathrm{col}}$ as post-treatment. Final graph-conditioned adjustment strategies therefore hard-exclude $X_{\mathrm{col}}$; the ``pre-filter collider adjustment'' column below reports how often sampled strategies would have used it before the final temporal admissibility filter. The naive baseline intentionally violates this metadata by forcing adjustment on $X_{\mathrm{col}}$.

\begin{table}[htbp]
\small
\centering
\caption{Collider-injection stress test on IHDP. The naive baseline forces $X_{\mathrm{col}}$ into the adjustment set. The final \cg{} adjustment sets hard-exclude this post-treatment variable; the last column is a pre-filter diagnostic. Values are reported as mean $\pm$ standard deviation over 10 seeds.}
\label{tab:collider_injection}
\begin{tabular}{lcccc}
\toprule
\textbf{Method} & \textbf{Coverage} & \textbf{Width} & \textbf{RMSE} & \textbf{\% pre-filter adj.} \\
\midrule
\cg & $99.9 \pm 0.2$ & 11.05 & 1.02 & 2.0\% \\
Naive, $X_{\mathrm{col}}$ adjusted & $82.3 \pm 8.3$ & 9.17 & 3.12 & 100.0\% \\
\bottomrule
\end{tabular}
\end{table}

Table~\ref{tab:collider_injection} shows the intended mechanism directly: forcing a collider into the adjustment set yields narrower but invalid intervals, while \cg excludes the injected collider from final adjustment sets, preserves near-nominal-or-better coverage, and substantially reduces RMSE. The small pre-filter rate indicates that a few sampled strategies initially considered the collider, but temporal admissibility removes it before estimation. Additional synthetic diagnostics with known structural roles appear in Appendix~\ref{app:clean_synthetic_ablation}; cross-fitting diagnostics appear in Appendix~\ref{app:crossfit}.

\subsection{Low-Data Prior Sensitivity}
\label{sec:prior-sensitivity}

Previous experiments show that at moderate sample sizes, BIC often dominates the proposal prior. Here, we also examine low-data regimes where the likelihood is less concentrated. Table~\ref{tab:low_data_prior} summarizes observed low-data cells comparing \cg with graph-agnostic conformal DR. Across these cells, \cg is substantially tighter at the same nominal level, with width reductions from $1.9\times$ to $10.2\times$. However, the real-data rows show that this efficiency gain is not always attributable to the LLM prior alone: prior spread is often small, indicating that graph-aware calibration and BIC weighting dominate. The synthetic rows, where structural ambiguity is controlled, show larger prior spread and illustrate the regime where proposal quality can materially affect interval width.

\begin{table}[htbp]
\small
\centering
\caption{Low-data prior sensitivity. Coverage is evaluated against the directly available target for each dataset. Width ratio is $\mathrm{width}(\mathrm{Conf\text{-}DR})/\mathrm{width}(\cg{}_{\mathrm{LLM}})$; larger values indicate that \cg{} is tighter. Prior spread is $(\max_v W_v-\min_v W_v)/\mathrm{mean}_v(W_v)$ over the LLM, uniform, and inverted-prior widths. Full results, including incomplete cells, appear in Appendix~\ref{app:prior-ablation}.}
\label{tab:low_data_prior}
\begin{tabular}{lccccc}
\toprule
\textbf{Dataset} & \textbf{$N$} & \textbf{Seeds} & \textbf{\cg Cov.} & \textbf{Width Ratio} & \textbf{Prior Spread} \\
\midrule
Sachs     & 50  & 5 & 97  & $5.2\times$  & 1.1\% \\
Sachs     & 200 & 5 & 92  & $1.9\times$  & 6.7\% \\
Jobs      & 50  & 5 & 95  & $10.2\times$ & 0.1\% \\
Jobs      & 200 & 5 & 96  & $3.9\times$  & 0.3\% \\
Synthetic & 100 & 5 & 100 & $3.2\times$  & 38.4\% \\
Synthetic & 200 & 5 & 100 & $2.0\times$  & 27.5\% \\
\bottomrule
\end{tabular}
\end{table}

The real-data rows show that graph-aware conformalization can be much tighter than graph-agnostic conformal DR even when the LLM prior itself has little direct effect on final width. The synthetic rows show the complementary regime: when structural roles are deliberately ambiguous and the likelihood is weak, proposal quality can materially change interval width. Together, these results support the view that the prior is useful primarily as a finite-sample proposal mechanism, while BIC weighting determines how much of that prior survives in the final estimator.

\subsection{Sensitivity and Boundary Cases}
\label{sec:sensitivity_boundary}

The remaining diagnostics support a narrower interpretation of the proposal mechanism. At IHDP scale, adversarial and uninformative priors do not cause coverage collapse, indicating that BIC dominates when the data are informative. Component ablations over 10 seeds show that \cg is insensitive to many proposal choices on stable datasets, while ACIC2019 is the most proposal-sensitive benchmark. Classical structure baselines on IHDP match \cg under the same conformal DR pipeline, reinforcing that the contribution is not exact graph recovery but conformal calibration under graph uncertainty. Full numerical diagnostics, including prior perturbations, component ablations, PC/GES baselines, LLM scaling, anonymization, runtime, CI-threshold sweeps, $K$-dependence, and calibration sweeps, are reported in Appendix~\ref{app:extra_experiments} and Sections~\ref{app:prior-component}--\ref{app:additional_sensitivity}.

These diagnostics are intended as scope checks rather than as additional headline claims. The prior-perturbation and anonymization studies test whether the method depends on trusting semantic labels; the answer is largely negative once the likelihood is informative. The component ablations test whether each stage independently improves every dataset; the answer is also negative, which is expected when BIC concentrates on a small set of data-supported adjustment strategies. The useful empirical distinction is therefore not whether every component moves every benchmark, but whether graph uncertainty changes the effective pseudo-outcome before calibration. When it does, as in the collider and low-data diagnostics, structure-aware calibration can materially improve validity or width. When it does not, \cg behaves like a stable conformal DR procedure with graph proposals largely washed out.

The computational diagnostics identify the main scaling boundary. Sparse, semantically guided proposals keep backdoor-set identification tractable, while dense uniform graph proposals can become intractable on high-dimensional datasets such as Twins. This is a limitation of the current finite-ensemble implementation, not of the conformal argument itself: the split-conformal guarantee applies to any fixed training-derived ensemble, but practical efficiency depends on generating a sparse candidate set that contains useful adjustment strategies.


\section{Discussion}
\label{sec:discussion}

\textbf{When structure weighting helps.} The central methodological contribution is that graph uncertainty is not treated as an external source of error to be padded after estimation; it is incorporated into the target of conformal calibration itself. \cg is most effective when interval inflation is driven by uncertainty in adjustment sets rather than by irreducible outcome noise. On IHDP and Jobs, \cg yields materially tighter intervals than graph-agnostic conformal baselines, although Jobs remains intrinsically difficult and all conformal methods incur large widths (Table~\ref{tab:baselines_all}). In this regime, the benefit is not exact graph recovery, but avoiding systematically biased adjustment choices by averaging over plausible structures.


\textbf{What role the prior actually plays.} The experiments support a narrower interpretation of the LLM component: it is best viewed as a finite-sample proposal distribution over candidate graphs, with model outputs filtered or reweighted rather than accepted directly. At moderate and large sample sizes, BIC largely determines the posterior weights, explaining the washout behavior in Appendix Table~\ref{tab:washout} and the weak dependence on model size or variable naming once enough data are available. This is desirable: the prior helps organize search when data are scarce, but the final interval becomes increasingly data-driven as $n$ grows. Interval validity therefore does not depend on the correctness of the prior, provided that the candidate set contains data-supported adjustment sets.

\textbf{Boundary cases and limitations.} ACIC2019 is the most split-sensitive benchmark: mean coverage is above nominal across seeds, but some splits fall below target (Table~\ref{tab:main_results}; Appendix~\ref{app:component-ablation}). Sub-nominal empirical coverage on individual ACIC2019 splits does not contradict Theorem~\ref{thm:coverage}, which is marginal over exchangeable calibration/test samples. The observed range reflects finite test-set variability and split sensitivity. Weak overlap affects the stability and causal interpretability of the pseudo-outcome, and therefore the CATE bridge, but not the formal split-conformal coverage statement for $\bar{\gamma}$. More broadly, the method still assumes causal sufficiency and currently scales only to moderate graph sizes because candidate generation and reweighting remain combinatorial. 

\section{Conclusion}
\label{sec:conclusion}


We introduced \cg{}, a structure-weighted conformal framework for treatment-effect estimation when the adjustment graph is unknown. Rather than selecting a single adjustment set, \cg{} aggregates graph-conditional doubly robust pseudo-outcomes and calibrates the resulting target with a composite conformal score. In this instantiation, an LLM-derived semantic prior proposes candidate DAGs, CI tests prune them, and BIC reweights the retained adjustment strategies. The method provides distribution-free finite-sample marginal coverage for the aggregated pseudo-outcome, while its conditional mean aligns with the CATE under target-alignment, overlap, nuisance-stability, and score-concentration conditions. Empirically, \cg{} yields tighter intervals when adjustment-set uncertainty drives inefficiency and remains stable to prior misspecification when the data are informative.



\bibliography{references.bib}
\bibliographystyle{unsrtnat}

\appendix

\section{Assumptions}
\label{app:assumptions}

We collect the regularity conditions required for the theoretical guarantees. Assumption~\ref{ass:exchange} is the only condition needed for the finite-sample pseudo-outcome guarantee (Theorem~\ref{thm:coverage}); the remaining assumptions are required for the asymptotic target-consistency statement (Corollary~\ref{cor:target_consistency}) and the BIC concentration result (Theorem~\ref{thm:bic}).

\begin{assumption}[Exchangeability and data separation]
\label{ass:exchange}
The calibration dataset $\mathcal{D}_{\mathrm{cal}} = \{(X_i, T_i, Y_i)\}_{i=1}^{n_{\mathrm{cal}}}$ and the novel test point $(X_{n+1}, T_{n+1}, Y_{n+1})$ are drawn exchangeably from the joint distribution $P_{XTY}$, independently of $\mathcal{D}_{\mathrm{train}}$. All data-dependent quantities entering the nonconformity scores---including the BMA weights over collapsed adjustment strategies $\{w_k\}$, the pruned graph structures $\{G_k\}$, the nuisance models, and the heuristic bounds---are computed exclusively on $\mathcal{D}_{\mathrm{train}}$.
\end{assumption}

\paragraph{Oracle graph-conditional pseudo-outcome.}
For graph $G_k$ with selected adjustment set $Z_k$, write
$\eta_k=(e_k,\mu_{0,k},\mu_{1,k})$ and define
\begin{equation}
\phi_k(O;\eta_k)
=
\mu_{1,k}(Z_k)-\mu_{0,k}(Z_k)
+
\frac{T\{Y-\mu_{1,k}(Z_k)\}}{e_k(Z_k)}
-
\frac{(1-T)\{Y-\mu_{0,k}(Z_k)\}}{1-e_k(Z_k)} ,
\label{eq:oracle_phi}
\end{equation}
where $O=(X,T,Y)$. Let $\eta_k^0$ denote the corresponding oracle nuisance functions for $Z_k$.

\begin{assumption}[Valid and target-aligned adjustment class]
\label{ass:support}
There exists a nonempty class $\mathcal{G}_{\star}$ of candidate graphs whose selected adjustment sets $Z_k$ satisfy SUTVA/consistency, contain no descendants of treatment, obey conditional exchangeability $(Y(0),Y(1))\perp T\mid Z_k$, and satisfy true overlap $0<\eta_0\leq P(T=1\mid Z_k)\leq 1-\eta_0<1$ almost surely. In addition, every $G_k\in\mathcal{G}_{\star}$ is target-aligned in the primitive moment sense
\begin{equation}
  \mathbb{E}\!\left[\phi_k(O;\eta_k^0)\mid X=x\right]
  =
  \tau(x)
  \quad\text{for $P_X$-almost every }x.
  \label{eq:target_alignment}
\end{equation}
This target-alignment condition is not a consequence of backdoor validity alone. Sufficient conditions include, for example, $e_0(X)=e_{0,k}(Z_k)$ almost surely, or $\mu_{t,0}(X)=\mu_{t,0,k}(Z_k)$ for $t\in\{0,1\}$, under the corresponding full-$X$ identification conditions.
\end{assumption}

\begin{assumption}[Conditional-mean nuisance stability]
\label{ass:rates}
For the target-aligned class $\mathcal{G}_{\star}$,
\begin{equation}
\max_{G_k\in\mathcal{G}_{\star}}
\left\|
\mathbb{E}\!\left[
\phi_k(O;\hat{\eta}_k)-\phi_k(O;\eta_k^0)
\mid X,\mathcal{D}_{\mathrm{train}}
\right]
\right\|_{L_1(P_X)}
\xrightarrow{p}0 .
\label{eq:conditional_stability}
\end{equation}
This condition is implied by boundedness, overlap, and sufficient individual consistency of the graph-conditional propensity and outcome regressions; it is the conditional-mean analogue needed for the CATE bridge and is stronger than the product-rate condition used for average-effect DML.
\end{assumption}

\begin{assumption}[BIC score separation]
\label{ass:bic_sep}
Let $\mathrm{BIC}(G;\mathcal{D}_{\mathrm{train}})$ be the working graph score used in Equation~\ref{eq:bma_weight}. The target-aligned class separates from the remaining finite ensemble:
\begin{equation}
\Delta_n
:=
\min_{G\in\mathcal{G}_{\star}}
\mathrm{BIC}(G;\mathcal{D}_{\mathrm{train}})
-
\max_{G\notin\mathcal{G}_{\star}}
\mathrm{BIC}(G;\mathcal{D}_{\mathrm{train}})
\xrightarrow{p}\infty .
\label{eq:bic_separation}
\end{equation}
\end{assumption}

\begin{assumption}[Boundedness]
\label{ass:bounded}
Outcomes are bounded: $|Y| \leq M$ a.s. Nuisance estimates satisfy $|\hat{\mu}_{t,k}| \leq C_\mu$ a.s. Propensity scores are clipped: $\eta \leq \hat{e}_k(x) \leq 1 - \eta$ for a constant $\eta > 0$. Since only one inverse-propensity residual term is active for each observation, these imply $|\tilde{Y}_i^{(k)}| \leq M_\gamma := 2C_\mu + (M + C_\mu)/\eta < \infty$ a.s.
\end{assumption}

\section{Theoretical Guarantees and Proofs}
\label{app:proofs_main}

\subsection{Proof of Theorem~\ref{thm:coverage}: Distribution-Free Coverage of the Weighted Pseudo-Outcome}

\begin{theorem}[Distribution-free finite-sample marginal coverage]
\label{thm:coverage}
Under Assumption~\ref{ass:exchange}, let $\hat{C}(x)$ be the interval constructed in Equation~\ref{eq:final_interval}. For any chosen miscoverage level $\alpha \in (0, 1)$, the \cg interval satisfies the distribution-free marginal coverage guarantee for the BMA-weighted pseudo-outcome:
\begin{equation}
    \mathbb{P}\Bigl(\bar{\gamma}_{n+1} \in \hat{C}(X_{n+1})\Bigr) \;\geq\; 1 - \alpha,
    \label{eq:finite_cov}
\end{equation}
where $\bar{\gamma}_{n+1} = \sum_{k=1}^K w_k \tilde{Y}_{n+1}^{(k)}$. This guarantee holds for any finite sample size and requires no assumptions on the correctness of any candidate graph.
\end{theorem}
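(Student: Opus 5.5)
The argument is the standard split-conformal one, applied to the composite score of Equation~\ref{eq:composite_score}. We condition on $\mathcal{D}_{\mathrm{train}}$ throughout. Under Assumption~\ref{ass:exchange}, every ingredient of the score is a fixed measurable function once we condition: the weights $w_k$, the adjustment sets $Z_k$, the nuisance estimates $\hat e^{(k)},\hat\mu_t^{(k)}$, and the bounds $\hat q^{(k)}_{\mathrm{low}},\hat q^{(k)}_{\mathrm{high}}$. Define the deterministic map
\[
s(O)=\max\bigl(\bar q_{\mathrm{low}}(X)-\bar\gamma(O),\;\bar\gamma(O)-\bar q_{\mathrm{high}}(X)\bigr),
\]
where $\bar\gamma(O)=\sum_k w_k\tilde Y^{(k)}(O)$ and $\tilde Y^{(k)}(O)$ is the AIPW expression of Equation~\ref{eq:aipw} evaluated at $O=(X,T,Y)$. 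Then $E_i=s(O_i)$ for $i\in\mathcal{D}_{\mathrm{cal}}$, and we set $E_{n+1}=s(O_{n+1})$. Because one fixed function is applied to exchangeable observations, the vector $(E_1,\dots,E_{n_{\mathrm{cal}}},E_{n+1})$ is exchangeable conditional on $\mathcal{D}_{\mathrm{train}}$.

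\textbf{Step 1: reduce the coverage event to a score event.} From the definition of $\hat C$ in Equation~\ref{eq:final_interval}, $\bar\gamma_{n+1}\in\hat C(X_{n+1})$ holds iff both $\bar q_{\mathrm{low}}(X_{n+1})-\bar\gamma_{n+1}\le\hat Q$ and $\bar\gamma_{n+1}-\bar q_{\mathrm{high}}(X_{n+1})\le\hat Q$. That is exactly $E_{n+1}\le\hat Q$. The direction needed for coverage holds regardless of whether $\bar q_{\mathrm{low}}\le\bar q_{\mathrm{high}}$, so no ordering of the heuristic bounds is required.

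\textbf{Step 2: the quantile lemma.} Let $n'=n_{\mathrm{cal}}$. Exchangeability gives that the rank of $E_{n+1}$ among the $n'+1$ scores is stochastically bounded by a uniform rank on $\{1,\dots,n'+1\}$, with ties broken in the favorable direction. Hence
\[
\mathbb{P}\bigl(E_{n+1}\le E_{(\lceil(1-\alpha)(n'+1)\rceil)}\mid\mathcal{D}_{\mathrm{train}}\bigr)\ge\lceil(1-\alpha)(n'+1)\rceil/(n'+1)\ge 1-\alpha,
\]
where the order statistic is taken among the calibration scores only. The usual argument runs as follows. $E_{n+1}$ exceeds the $m$-th smallest calibration score iff $E_{n+1}$ is strictly larger than at least $m$ of the calibration scores. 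Its strict rank among all $n'+1$ scores is then greater than $m$, and that event has probability at most $(n'+1-m)/(n'+1)$. If $\lceil(1-\alpha)(n'+1)\rceil>n'$, take $\hat Q=+\infty$ and the bound is trivial.

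\textbf{Step 3: remove the conditioning.} Integrating the conditional bound over $\mathcal{D}_{\mathrm{train}}$ gives the marginal statement~\eqref{eq:finite_cov}. Graph correctness enters nowhere, since $s$ is an arbitrary fixed function, so the bound is distribution-free.

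The main obstacle is Step 2 with ties, because the scores may be discrete. I would handle it by proving the inequality $\mathbb{P}(\text{strict rank}>m)\le(n'+1-m)/(n'+1)$ directly from exchangeability. The alternative is to cite the standard split-conformal lemma of \citet{lei2018distribution} and \citet{romano2019conformalized}, which covers ties, rather than assuming almost surely distinct scores.
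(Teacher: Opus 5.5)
Your proposal is correct and matches the paper's proof step for step. You condition on $\mathcal{D}_{\mathrm{train}}$ so the composite score is a fixed function of exchangeable observations, apply the split-conformal quantile lemma, use the exact equivalence $E_{n+1}\le\hat Q \iff \bar\gamma_{n+1}\in\hat C(X_{n+1})$, and integrate out the training data. Your strict-rank counting bound for ties is more careful than the paper's claim that the rank is exactly uniform, as is your convention $\hat Q=+\infty$ when $\lceil(1-\alpha)(n_{\mathrm{cal}}+1)\rceil>n_{\mathrm{cal}}$.
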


\begin{proof}
\textbf{Step 1: Exchangeability of composite scores.}

Define the composite nonconformity score as in Equation~\ref{eq:composite_score}:
\begin{equation}
  E_i \;=\; \max\!\Bigl(
    \bar{q}_{\mathrm{low}}(X_i) - \bar{\gamma}_i,\;\;
    \bar{\gamma}_i - \bar{q}_{\mathrm{high}}(X_i)
  \Bigr),
  \label{eq:composite_proof}
\end{equation}
where $\bar{\gamma}_i = \sum_k w_k \tilde{Y}_i^{(k)}$, $\bar{q}_{\mathrm{low}}(X_i) = \sum_k w_k \hat{q}_{\mathrm{low}}^{(k)}(X_i)$, and $\bar{q}_{\mathrm{high}}(X_i) = \sum_k w_k \hat{q}_{\mathrm{high}}^{(k)}(X_i)$.

By Assumption~\ref{ass:exchange}, the BMA weights over collapsed adjustment strategies $\{w_k\}$, the graph structures $\{G_k\}$ (after CI pruning), the heuristic bounds $[\hat{q}_{\mathrm{low}}^{(k)}, \hat{q}_{\mathrm{high}}^{(k)}]$, and the nuisance models for the AIPW pseudo-outcomes are all computed exclusively on $\mathcal{D}_{\mathrm{train}}$. Conditioned on $\mathcal{D}_{\mathrm{train}}$, these are fixed deterministic functions. Consequently, each $E_i$ is a deterministic, symmetric function of the exchangeable tuple $(X_i, T_i, Y_i)$. Since the calibration points and the test point are exchangeable given $\mathcal{D}_{\mathrm{train}}$, the scores $\{E_i\}_{i \in \mathcal{D}_{\mathrm{cal}}} \cup \{E_{n+1}\}$ are exchangeable.

\textbf{Step 2: Conformal guarantee.}

Let $\hat{Q}$ be the $\lceil (1-\alpha)(|\mathcal{D}_{\mathrm{cal}}|+1) \rceil$-th smallest value among the calibration scores. By the foundational lemma of split-conformal prediction \citep{vovk2005algorithmic}, the rank of the test score $E_{n+1}$ among the augmented set $\{E_i\}_{i \in \mathcal{D}_{\mathrm{cal}}} \cup \{E_{n+1}\}$ is uniformly distributed, yielding:
\begin{equation}
    \mathbb{P}(E_{n+1} \leq \hat{Q} \mid \mathcal{D}_{\mathrm{train}}) \;\geq\; 1 - \alpha.
    \label{eq:conformal_core}
\end{equation}

\textbf{Step 3: From scores to interval containment.}

We show that $E_{n+1} \leq \hat{Q}$ if and only if $\bar{\gamma}_{n+1} \in \hat{C}(X_{n+1})$. By definition of $E_{n+1}$ (Equation~\ref{eq:composite_proof}):
\begin{align}
  E_{n+1} \leq \hat{Q}
  \;&\Longleftrightarrow\;
  \bar{q}_{\mathrm{low}}(X_{n+1}) - \bar{\gamma}_{n+1} \leq \hat{Q}
  \;\;\text{and}\;\;
  \bar{\gamma}_{n+1} - \bar{q}_{\mathrm{high}}(X_{n+1}) \leq \hat{Q} \\
  \;&\Longleftrightarrow\;
  \bar{q}_{\mathrm{low}}(X_{n+1}) - \hat{Q}
  \;\leq\; \bar{\gamma}_{n+1}
  \;\leq\; \bar{q}_{\mathrm{high}}(X_{n+1}) + \hat{Q} \\
  \;&\Longleftrightarrow\;
  \bar{\gamma}_{n+1} \in \hat{C}(X_{n+1}).
\end{align}
Note that this implication is \emph{exact} (an if-and-only-if), with no slack from Jensen's inequality (see Remark~\ref{rem:jensen}). Combining with Equation~\ref{eq:conformal_core} and taking expectations over $\mathcal{D}_{\mathrm{train}}$ via the tower property:
\begin{equation}
  \mathbb{P}\bigl(\bar{\gamma}_{n+1} \in \hat{C}(X_{n+1})\bigr)
  \;=\;
  \mathbb{P}(E_{n+1} \leq \hat{Q})
  \;\geq\; 1 - \alpha. \qedhere
\end{equation}
\end{proof}

\begin{remark}[Tightness of the composite score]
\label{rem:jensen}
An alternative formulation defines the composite score as $E_i^{\dagger} = \sum_k w_k \max\!\bigl(\hat{q}_{\mathrm{low}}^{(k)}(X_i) - \tilde{Y}_i^{(k)},\; \tilde{Y}_i^{(k)} - \hat{q}_{\mathrm{high}}^{(k)}(X_i)\bigr)$. By Jensen's inequality ($\max$ is convex and $\{w_k\}$ form a probability simplex), $E_i^{\dagger} \geq E_i$ pointwise. Thus, under the same calibration set and quantile rule, calibrating against $E_i^{\dagger}$ yields a threshold no smaller than the one obtained from $E_i$ while providing the same pseudo-outcome coverage guarantee. Our formulation in Equation~\ref{eq:composite_score} therefore avoids an avoidable Jensen gap relative to graph-wise calibration followed by averaging within this BMA-weighted conformal construction.
\end{remark}

\subsection{Corollary~\ref{cor:target_consistency}: Asymptotic Target Consistency}

\begin{corollary}[Asymptotic target consistency]
\label{cor:target_consistency}
Let
\begin{equation}
  m_n(x)
  \;=\;
  \mathbb{E}\!\left[\bar{\gamma}_{n+1}\mid X_{n+1}=x,\mathcal{D}_{\mathrm{train}}\right]
  \label{eq:target_mean}
\end{equation}
denote the conditional mean of the BMA-weighted pseudo-outcome. Under Assumptions~\ref{ass:exchange}--\ref{ass:bounded}, if the posterior weights concentrate on the target-aligned valid class $\mathcal{G}_{\star}$ from Assumption~\ref{ass:support}, then
\begin{equation}
  \bigl\|m_n-\tau\bigr\|_{L_1(P_X)}
  \;\xrightarrow{p}\;0 .
  \label{eq:target_consistency}
\end{equation}
\end{corollary}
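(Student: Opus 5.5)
The plan is to decompose $m_n-\tau$ by linearity of conditional expectation across the weighted ensemble. This leaves three pieces: a nuisance-error term on the target-aligned class, a bias term that vanishes by target alignment, and a residual weight-mass term on misaligned graphs. Conditional on $\mathcal{D}_{\mathrm{train}}$, the weights $w_k$, selected sets $Z_k$ and fitted nuisances $\hat\eta_k$ are fixed. By Assumption~\ref{ass:exchange}, the test point is independent of $\mathcal{D}_{\mathrm{train}}$, so
\begin{equation*}
m_n(x)=\sum_k w_k\,\mathbb{E}\!\left[\phi_k(O;\hat\eta_k)\mid X=x,\mathcal{D}_{\mathrm{train}}\right],
\end{equation*}
where we identify $\tilde Y^{(k)}_{n+1}=\phi_k(O_{n+1};\hat\eta_k)$ as in Equation~\ref{eq:aipw}. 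Since $\sum_k w_k=1$, we write $\tau=\sum_k w_k\tau$. Splitting the sum over $k$ into $G_k\in\mathcal{G}_\star$ and $G_k\notin\mathcal{G}_\star$ gives
\begin{equation*}
m_n-\tau=\sum_{G_k\in\mathcal{G}_\star} w_k\bigl(\mathbb{E}[\phi_k(O;\hat\eta_k)-\phi_k(O;\eta_k^0)\mid X,\mathcal{D}_{\mathrm{train}}]+\{\mathbb{E}[\phi_k(O;\eta_k^0)\mid X]-\tau\}\bigr)+\sum_{G_k\notin\mathcal{G}_\star} w_k\bigl(\mathbb{E}[\phi_k(O;\hat\eta_k)\mid X,\mathcal{D}_{\mathrm{train}}]-\tau\bigr).
\end{equation*}

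Next, take $L_1(P_X)$ norms and apply the triangle inequality. The middle term vanishes $P_X$-a.e. by the target-alignment identity \eqref{eq:target_alignment} of Assumption~\ref{ass:support}. The first term is bounded by $\bigl(\sum_{\mathcal{G}_\star}w_k\bigr)$ times the maximum in \eqref{eq:conditional_stability}, which is at most that maximum and hence $o_p(1)$ by Assumption~\ref{ass:rates}. The class is finite, and the max is taken over $\mathcal{G}_\star$, so no union issue arises.

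For the misaligned term we use Assumption~\ref{ass:bounded}. It gives $|\mathbb{E}[\phi_k(O;\hat\eta_k)\mid X,\mathcal{D}_{\mathrm{train}}]|\le M_\gamma$. For $\tau$, write $|\tau(x)|\le 2M$ since $|Y(t)|\le M$; if only $|Y|\le M$ is assumed, use target alignment with the oracle pseudo-outcome and true overlap to obtain $|\tau|\le 2M/\eta_0$. The misaligned term is then at most $(M_\gamma+2M/\eta_0)\sum_{G_k\notin\mathcal{G}_\star}w_k$.

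The main step, and the expected obstacle, is showing that this residual mass is $o_p(1)$. This is exactly the hypothesis that the posterior weights concentrate on $\mathcal{G}_\star$. If needed, I would derive it from Assumption~\ref{ass:bic_sep} as follows. The log-weights in Equation~\ref{eq:bma_weight} equal BIC plus a Bernoulli prior term. Because of the clipping to $[\epsilon,1-\epsilon]$, that prior term is bounded in absolute value by $d_{\max}\ln(1/\epsilon)$, a constant $c$ with $d_{\max}=|V|(|V|-1)$. For any misaligned $G_j$ and any aligned $G_i$, this gives $\tilde w_j/\tilde w_i\le e^{2c-\Delta_n}$. Summing over the finite ensemble of size $K$, both before and after the adjustment-strategy collapse in \eqref{eq:adjustment_collapse}, yields $\sum_{\notin\mathcal{G}_\star}w_k\le Ke^{2c-\Delta_n}\xrightarrow{p}0$.

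One subtlety needs care. After collapsing, a single adjustment set $Z$ could be shared by aligned and misaligned graphs. Since $\phi_k$ depends on $G_k$ only through $Z_k$, such a $Z$ is itself aligned, so I would classify strategies by $Z$ rather than by graph. Combining the three bounds gives $\|m_n-\tau\|_{L_1(P_X)}\xrightarrow{p}0$.
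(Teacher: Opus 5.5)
Your proposal is correct and follows essentially the same route as the paper. You write $m_n-\tau=\sum_k w_k\Delta_k$ using $\sum_k w_k=1$, control the $\mathcal{G}_\star$ part by target alignment together with Assumption~\ref{ass:rates}, and bound the complement by a constant times $1-W_\star$, where $W_\star\xrightarrow{p}1$ follows from BIC separation exactly as in Theorem~\ref{thm:bic}; your explicit constant for $|\tau|$ and your classification of collapsed strategies by $Z$ only make explicit two points that the paper's $2M_\gamma(1-W_\star)$ bound and its graph-indexed weights leave implicit.
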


\begin{proof}
The proof concerns the conditional mean of the pseudo-outcome, not the realized individual pseudo-outcome. Even with oracle nuisance functions, an AIPW pseudo-outcome contains outcome noise and generally does not converge pointwise to $\tau(X_i)$.

\textbf{Step 1: Target-aligned graph pseudo-outcomes are stable in conditional mean.}

For any graph $G_k\in\mathcal{G}_{\star}$, the adjustment set $Z_k$ satisfies the causal identification and overlap conditions in Assumption~\ref{ass:support}. The target-alignment moment condition in Equation~\ref{eq:target_alignment} gives $\mathbb{E}[\phi_k(O;\eta_k^0)\mid X=x]=\tau(x)$, while Assumption~\ref{ass:rates} controls the conditional-mean difference induced by using estimated nuisances. Thus
\begin{equation}
  \left\|
  \mathbb{E}\!\left[
  \tilde{Y}_{n+1}^{(k)}
  \mid X_{n+1},\mathcal{D}_{\mathrm{train}}
  \right]
  -
  \tau(X_{n+1})
  \right\|_{L_1(P_X)}
  \;\xrightarrow{p}\;0,
  \label{eq:valid_mean}
\end{equation}
uniformly over $G_k\in\mathcal{G}_{\star}$. This is not a consequence of backdoor validity alone: it uses the primitive target-alignment moment condition and the conditional-mean nuisance stability assumption.

\textbf{Step 2: Posterior concentration transfers the conditional mean.}

Write $W_{\star}=\sum_{G_k\in\mathcal{G}_{\star}}w_k$. By Theorem~\ref{thm:bic}, $W_{\star}\xrightarrow{p}1$ under the BIC score-separation assumption. Therefore,
\begin{align}
  m_n(x)-\tau(x)
  &=
  \sum_{G_k\in\mathcal{G}_{\star}}w_k
  \Delta_k(x)
  +
  \sum_{G_k\notin\mathcal{G}_{\star}}w_k
  \Delta_k(x),
  \label{eq:target_transfer}\\
  \Delta_k(x)
  &:=
  \mathbb{E}\!\left[
  \tilde{Y}_{n+1}^{(k)}
  \mid X_{n+1}=x,\mathcal{D}_{\mathrm{train}}
  \right]-\tau(x).
  \nonumber
\end{align}
Taking $L_1(P_X)$ norms, the first term converges to zero by Equation~\ref{eq:valid_mean}; the second term is bounded by $2M_\gamma(1-W_{\star})$ by Assumption~\ref{ass:bounded}, and hence converges to zero in probability. Thus $\|m_n-\tau\|_{L_1(P_X)}\xrightarrow{p}0$.
\end{proof}

\begin{remark}[No pointwise pseudo-outcome convergence]
Corollary~\ref{cor:target_consistency} does not imply that $\tilde{Y}_i^{(k)}$ or $\bar{\gamma}_i$ converges pointwise to $\tau(X_i)$. The AIPW pseudo-outcome is a noisy pseudo-label whose conditional expectation targets the CATE only under the target-alignment assumptions above. Therefore Theorem~\ref{thm:coverage} should be interpreted as finite-sample coverage for $\bar{\gamma}$, while Corollary~\ref{cor:target_consistency} states that the conditional mean of this target is asymptotically aligned with $\tau$ in $L_1(P_X)$ under additional causal, nuisance, support, and score-concentration conditions.
\end{remark}

\subsection{Proof of Theorem~\ref{thm:bic}: BIC Score Concentration on the Target-Aligned Class}

\begin{theorem}[Prior washout under BIC score separation]
\label{thm:bic}
Let $\mathcal{G}_{\star}$ be the nonempty target-aligned candidate class from Assumption~\ref{ass:support}. Assume the BIC score separates $\mathcal{G}_{\star}$ from its complement as in Assumption~\ref{ass:bic_sep}. Assume the LLM prior is strictly bounded: $P_{\mathrm{prior}}(u \to v) \in [\epsilon,1-\epsilon]$ for a constant $\epsilon > 0$. Then the posterior-like mass assigned to $\mathcal{G}_{\star}$ satisfies, as $n \to \infty$:
\begin{equation}
  \sum_{G_k\in\mathcal{G}_{\star}} w_k \xrightarrow{p} 1.
\end{equation}
\end{theorem}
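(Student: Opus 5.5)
The plan is to write the normalized weights as a softmax of the log-scores in Equation~\ref{eq:bma_weight}. Then I will show that the prior term is uniformly bounded, so it cannot offset the diverging BIC gap $\Delta_n$ from Assumption~\ref{ass:bic_sep}. Since collapsing duplicate adjustment strategies (Equation~\ref{eq:adjustment_collapse}) only sums $\exp(\ln\tilde w_k)$ over graphs, I will work at the graph level. Here $\sum_{G_k\in\mathcal{G}_\star}w_k$ is read as the total normalized mass of the graphs in $\mathcal{G}_\star$. This presumes that $\mathcal{G}_\star$ is a union of collapse classes, i.e.\ target alignment depends on $G_k$ only through $Z_k$, which holds because $\phi_k$ depends only on $Z_k$.

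\textbf{Step 1 (prior bound).} For each candidate $G_k$, the prior term is $\pi_k:=\sum_{(u,v)\in E_k}\ln P_{\mathrm{prior}}(u\to v)+\sum_{(u,v)\in\mathcal{A}_k\setminus E_k}\ln(1-P_{\mathrm{prior}}(u\to v))$. Each summand lies in $[\ln\epsilon,0]$ by clipping, and the number of summands is $|\mathcal{A}_k|\le |V|(|V|-1)$. Hence $\pi_k\in[-B,0]$ with $B:=|V|(|V|-1)\ln(1/\epsilon)$, a constant not depending on $n$.

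\textbf{Step 2 (softmax ratio).} Write $\ell_k=\mathrm{BIC}(G_k;\mathcal{D}_{\mathrm{train}})+\pi_k$. Then
\[
1-W_\star=\frac{\sum_{G_k\notin\mathcal{G}_\star}e^{\ell_k}}{\sum_{j}e^{\ell_j}}\le\frac{\sum_{G_k\notin\mathcal{G}_\star}e^{\ell_k}}{e^{\ell_{j^\star}}}
\]
for any fixed $j^\star$ with $G_{j^\star}\in\mathcal{G}_\star$, which exists since the class is nonempty. For $G_k\notin\mathcal{G}_\star$, we have $\ell_k-\ell_{j^\star}\le \max_{G\notin\mathcal{G}_\star}\mathrm{BIC}-\min_{G\in\mathcal{G}_\star}\mathrm{BIC}+B=-\Delta_n+B$. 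Therefore $1-W_\star\le K e^{B-\Delta_n}$.

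\textbf{Step 3 (limit).} Since $K$ is finite and fixed and $\Delta_n\xrightarrow{p}\infty$, for any $\delta>0$ we get $\mathbb{P}(1-W_\star>\delta)\le\mathbb{P}(\Delta_n< B+\ln(K/\delta))\to0$, so $W_\star\xrightarrow{p}1$.

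\textbf{Main obstacle.} The substantive subtlety is not the inequality but the bookkeeping. The ensemble size $K$ and the candidate set are random, being produced by sampling and CI pruning on $\mathcal{D}_{\mathrm{train}}$. I need $K$ bounded by a deterministic constant (true by construction) and $\mathcal{G}_\star\cap\{\text{surviving graphs}\}$ nonempty with probability tending to one. I will take the latter as part of the hypothesis that $\mathcal{G}_\star$ is a nonempty class of candidates, with $\Delta_n$ defined over the surviving ensemble. I also assume the convention that higher $\mathrm{BIC}$ is better, as implicit in Equation~\ref{eq:bma_weight}. Graphs excluded for lacking an admissible adjustment set carry no weight and do not affect the bound.
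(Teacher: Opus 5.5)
Your proposal is correct and takes essentially the same route as the paper's proof. Both bound the structural-prior log-ratio by an $n$-independent constant (the paper uses $2|V|^2|\ln\epsilon|$), let the diverging BIC gap $\Delta_n$ dominate it, and use finiteness of the ensemble to drive the out-of-class mass to zero. Your version is somewhat more explicit than the paper's, which simply says ``since the ensemble is finite'': you give the quantitative bound $1-W_\star\le K e^{B-\Delta_n}$ and you treat the adjustment-set collapse and the randomness of the surviving ensemble directly.
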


\begin{proof}
Recall the unnormalized log-weight for graph $G_k$ from Equation~\ref{eq:bma_weight}:
\begin{equation}
    \ln \tilde{w}_k \;=\; \mathrm{BIC}(G_k) + \ln P_{\mathrm{prior}}(G_k),
\end{equation}
where $\mathrm{BIC}(G_k) = \log \hat{\mathcal{L}}(G_k \mid \mathbf{X}) - \frac{d_k}{2} \log n$ and $d_k$ is the number of free parameters.

The complete structural prior (Equation~\ref{eq:bma_weight}) is defined over the admissible edge set $\mathcal{A}_k$:
\begin{equation}
  \ln P_{\mathrm{prior}}(G_k)
  \;=\;
  \sum_{(u,v) \in E_k} \ln P_{\mathrm{prior}}(u \to v)
  \;+\;
  \sum_{(u,v) \in \mathcal{A}_k\setminus E_k} \ln\bigl(1 - P_{\mathrm{prior}}(u \to v)\bigr).
\end{equation}
Because each $P_{\mathrm{prior}}(u \to v) \in [\epsilon, 1-\epsilon]$, each log-term is bounded by $|\ln \epsilon|$. Each admissible edge set contains at most $|V|^2$ terms, so for any pair $G_j,G_\ell$ the prior ratio satisfies:
\begin{equation}
  \Bigl|\ln \frac{P_{\mathrm{prior}}(G_j)}{P_{\mathrm{prior}}(G_\ell)}\Bigr|
  \;\leq\;
  2|V|^2 \cdot |\ln \epsilon|
  \;=\; O(1) \;\text{with respect to } n.
\end{equation}

Fix any $G_\ell\notin\mathcal{G}_{\star}$ and any $G_j\in\mathcal{G}_{\star}$. The log-weight ratio is
\begin{equation}
    \Delta_{j,\ell}
    \;=\;
    \ln \tilde{w}_j-\ln \tilde{w}_\ell
    =
    \bigl(\mathrm{BIC}(G_j) - \mathrm{BIC}(G_\ell)\bigr)
    \;+\;
    \ln \frac{P_{\mathrm{prior}}(G_j)}{P_{\mathrm{prior}}(G_\ell)}.
\end{equation}
The prior ratio is $O(1)$, while Assumption~\ref{ass:bic_sep} gives $\mathrm{BIC}(G_j)-\mathrm{BIC}(G_\ell)\xrightarrow{p}\infty$ uniformly for $G_j\in\mathcal{G}_{\star}$ and $G_\ell\notin\mathcal{G}_{\star}$. Hence $\Delta_{j,\ell}\xrightarrow{p}\infty$ for every graph outside $\mathcal{G}_{\star}$. Since the ensemble is finite,
\begin{equation}
  \frac{\sum_{G_\ell\notin\mathcal{G}_{\star}}\tilde{w}_\ell}
       {\sum_{G_j\in\mathcal{G}_{\star}}\tilde{w}_j}
  \xrightarrow{p}0,
\end{equation}
and therefore $\sum_{G_k\in\mathcal{G}_{\star}} w_k \xrightarrow{p}1$. This proves that the BIC score asymptotically dominates any bounded LLM prior at the class level whenever the chosen score separates the target-aligned class; it does not require the posterior to select a unique DAG within a Markov-equivalent or BIC-tied class.
\end{proof}

\begin{remark}[Singleton optimal class]
If $\mathcal{G}_{\star}$ contains a single graph $G_{k^*}$, Theorem~\ref{thm:bic} reduces to the familiar statement $w_{k^*}\xrightarrow{p}1$. In observational causal models, however, Markov-equivalent or adjustment-equivalent graphs may be indistinguishable by BIC, so the class-level statement is the safer default.
\end{remark}

\begin{remark}[Working BIC scores for mixed data]
\label{rem:gaussian}
When Gaussian local likelihoods are used for mixed or binary data, the BIC score should be interpreted as a working score. The conformal guarantee is unaffected; only the asymptotic score-concentration interpretation requires separation under the chosen scoring rule. Logistic or multinomial local likelihoods can replace Gaussian local likelihoods for discrete nodes without changing the split-conformal construction.
\end{remark}

\begin{remark}[Finite ensemble support]
\label{rem:support}
Theorems~\ref{thm:coverage} and~\ref{thm:bic} serve different roles. Theorem~\ref{thm:coverage} (finite-sample coverage of $\bar{\gamma}$) holds \emph{unconditionally}---it does not require the ensemble to contain a true graph, a valid graph, or a target-aligned graph. Corollary~\ref{cor:target_consistency} and Theorem~\ref{thm:bic} require Assumption~\ref{ass:support}, whose plausibility depends on $K$ and the proposal quality. Under the edge-independent Bernoulli sampling model, the probability of sampling a particular graph $G^*$ is:
\begin{equation}
  \begin{aligned}
  P(G^* \in \mathcal{G}_K)
  \;=\;
  1 - \Bigl(1
  &- \prod_{(u,v) \in E^*} P_{\mathrm{prior}}(u \to v) \\
  &\cdot
  \prod_{(u,v) \notin E^*} \bigl(1 - P_{\mathrm{prior}}(u \to v)\bigr)
  \cdot P(\pi \text{ compatible})\Bigr)^K.
  \end{aligned}
\end{equation}
For moderately sized graphs ($p \geq 20$, $|E^*| \geq 30$), this probability can be small even for $K = 100$. When the ensemble contains no target-aligned valid adjustment strategy, Theorem~\ref{thm:coverage} still guarantees coverage of $\bar{\gamma}$, but Corollary~\ref{cor:target_consistency} does not apply. We recommend $K \geq 50$ for $p \leq 15$ and structured ensemble diversification for larger problems.
\end{remark}

\section{Additional Methodology Details}
\label{app:method_details}

\subsection{Algorithmic Summary}
\label{app:algorithm}

\begin{algorithm}[H]
\small
\caption{\cg{}: structure-weighted conformal inference}
\label{alg:causalguard}
\begin{algorithmic}[1]
\REQUIRE Observational data $\mathcal{D}$, variables $V$, treatment $T$, outcome $Y$, graph count $K$, miscoverage level $\alpha$
\STATE Split $\mathcal{D}$ into $\mathcal{D}_{\mathrm{train}}$ and $\mathcal{D}_{\mathrm{cal}}$
\STATE Elicit edge prior $P_{\mathrm{prior}}(u\to v)$ for ordered pairs $(u,v)\in V\times V$
\FOR{$k=1,\ldots,K$}
    \STATE Sample a topological order and candidate DAG $G_k$ from $P_{\mathrm{prior}}$
    \STATE Prune sampled edges using graph-local conditional-independence checks
    \STATE Identify selected adjustment set $Z_k$; discard $G_k$ if no admissible set exists
    \STATE Compute graph-level unnormalized score $\tilde w_k$ using Equation~\ref{eq:bma_weight}
\ENDFOR
\STATE Collapse duplicate selected adjustment sets; aggregate graph scores by $Z$:
$\tilde w_Z=\sum_{k:Z_k=Z}\tilde w_k$
\STATE Normalize over unique adjustment sets, $w_Z=\tilde w_Z/\sum_{Z'}\tilde w_{Z'}$
\FOR{each unique selected adjustment set $Z$}
    \STATE Fit graph-conditional nuisance models for $Z$ on $\mathcal{D}_{\mathrm{train}}$
    \STATE Evaluate pseudo-outcomes $\tilde Y_i^{(Z)}$ and bounds $[\hat q_{\mathrm{low}}^{(Z)},\hat q_{\mathrm{high}}^{(Z)}]$ on $\mathcal{D}_{\mathrm{cal}}$
\ENDFOR
\STATE Form $\bar\gamma_i=\sum_Z w_Z\tilde Y_i^{(Z)}$, aggregated bounds, and composite scores $E_i$
\STATE Let $\hat Q$ be the split-conformal quantile of $\{E_i:i\in\mathcal{D}_{\mathrm{cal}}\}$
\STATE \textbf{return} $\hat C(x)=[\bar q_{\mathrm{low}}(x)-\hat Q,\bar q_{\mathrm{high}}(x)+\hat Q]$
\end{algorithmic}
\end{algorithm}

\subsection{Adjustment-Set Extraction and Failed Graphs}
\label{app:adjustment_sets}

For each candidate graph, the implementation extracts graph-implied backdoor adjustment sets for the effect of $T$ on $Y$. When multiple valid sets are available, we use a minimum-cardinality valid set to avoid unnecessary adjustment variables. Graphs for which no admissible backdoor set is found, or whose identified set contains a treatment descendant under the graph's temporal constraints, are excluded from the ensemble before normalizing the weights. After this filtering step, duplicate pruned graphs and duplicate selected adjustment strategies are collapsed. Because the downstream AIPW estimator depends on the selected adjustment set, graphs inducing the same $Z$ are represented by a single adjustment strategy with unnormalized weight $\tilde w_Z=\sum_{k:Z_k=Z}\tilde w_k$. The normalized weights are therefore defined over unique adjustment strategies rather than raw proposal multiplicities. This prevents repeated proposal draws from counting the LLM prior twice; the prior is used to propose a candidate set and as a bounded structural prior over that restricted set. This rule affects efficiency and the asymptotic target-consistency assumptions, but not Theorem~\ref{thm:coverage}: after training, any remaining finite ensemble induces exchangeable calibration scores.

\subsection{BIC Scoring for Mixed Data Types}
\label{app:bic_mixed}

The structural BIC in Equation~\ref{eq:bma_weight} uses Gaussian local likelihoods (linear regression of each node on its parents). For continuous covariates and outcome, this is appropriate; for the binary treatment node $T$, a logistic local likelihood would be more principled. In our implementation, we apply Gaussian scoring uniformly for simplicity. Classical BIC consistency results apply when the scoring family is correctly specified and the KL-optimal class is identifiable under that family \citep{haughton1988choice}. When binary or categorical nodes are important, logistic or multinomial local likelihoods can be substituted without changing the split-conformal construction; only the asymptotic BIC-concentration interpretation changes.

\subsection{Data Separation Protocol}
\label{app:data_sep}

Strict data separation is essential to preserve the exchangeability required for the conformal guarantee. The strict data separation required by Assumption~\ref{ass:exchange} is enforced as follows. The full dataset is partitioned into $\mathcal{D}_{\mathrm{train}}$ (60\%), $\mathcal{D}_{\mathrm{cal}}$ (20\%), and $\mathcal{D}_{\mathrm{test}}$ (20\%). All operations in Stages 1--3 of the pipeline use $\mathcal{D}_{\mathrm{train}}$ exclusively:
\begin{enumerate}[nosep]
  \item \textbf{CI pruning} (Section~\ref{sec:dag_sampling}): partial correlations computed on $\mathcal{D}_{\mathrm{train}}$.
  \item \textbf{BIC scores and BMA weights} (Equations~\ref{eq:bma_weight}--\ref{eq:adjustment_collapse}): Gaussian likelihoods evaluated on $\mathcal{D}_{\mathrm{train}}$, followed by duplicate adjustment-strategy collapse before normalization.
  \item \textbf{Nuisance models}: propensity score $\hat{e}^{(k)}$, outcome models $\hat{\mu}_t^{(k)}$, and heuristic bounds $[\hat{q}_{\mathrm{low}}^{(k)}, \hat{q}_{\mathrm{high}}^{(k)}]$ all fitted on $\mathcal{D}_{\mathrm{train}}$.
\end{enumerate}
Stage 4 (conformal calibration) uses $\mathcal{D}_{\mathrm{cal}}$ only for evaluating out-of-sample pseudo-outcomes, computing nonconformity scores, and determining $\hat{Q}$. This separation ensures that, conditioned on $\mathcal{D}_{\mathrm{train}}$, the composite scores are symmetric functions of exchangeable random variables.

\subsection{Coverage Targets}
\label{app:coverage_targets}

We report two coverage metrics throughout our experiments. \emph{Pseudo-outcome coverage} measures $\mathbb{P}(\bar{\gamma}_{n+1} \in \hat{C}(X_{n+1}))$, which is the quantity guaranteed by Theorem~\ref{thm:coverage} at finite sample sizes with no assumptions beyond exchangeability. \emph{CATE coverage} measures $\mathbb{P}(\tau(X_{n+1}) \in \hat{C}(X_{n+1}))$, which requires ground-truth treatment effects and is reported only as an empirical diagnostic. On datasets where ground-truth $\tau$ is available (IHDP, Twins), we report CATE coverage; on observational datasets (Sachs, Jobs, ACIC2019), we report pseudo-outcome coverage. The distinction between these two coverage targets is standard in conformal causal inference \citep{lei2021conformal, alaa2023conformal}.
Throughout the paper, all finite-sample guarantees refer to pseudo-outcome coverage. Corollary~\ref{cor:target_consistency} supports the use of the pseudo-outcome's conditional mean as an asymptotically consistent target, but it does not convert Theorem~\ref{thm:coverage} into finite-sample CATE coverage.

\subsection{Real-Valued Conformity Scores}
\label{app:real_scores}

The composite score in Equation~\ref{eq:composite_score} is allowed to be negative when the aggregated pseudo-outcome lies inside the preliminary interval. This does not affect split-conformal validity: the rank argument in Theorem~\ref{thm:coverage} requires only exchangeability of the real-valued calibration and test scores. A positive-part score, $\max\{E_i,0\}$, would also be valid but would change the reported interval widths and is not used in the experiments.

\subsection{Cross-Fitting}
\label{app:crossfit}

Our current implementation fits nuisance models ($\hat{e}_k$, $\hat{\mu}_{t,k}$) and constructs pseudo-outcomes on the same training fold $\mathcal{D}_{\mathrm{train}}$. Cross-fitting (sample-splitting within $\mathcal{D}_{\mathrm{train}}$) would reduce finite-sample bias in the pseudo-outcomes and could help mitigate overcoverage by producing tighter conformity scores. Since the conformal guarantee (Theorem~\ref{thm:coverage}) depends only on exchangeability of the calibration scores (computed on $\mathcal{D}_{\mathrm{cal}}$, separate from $\mathcal{D}_{\mathrm{train}}$), cross-fitting within $\mathcal{D}_{\mathrm{train}}$ does not affect coverage validity but may improve efficiency. Implementing cross-fitting would strengthen the nuisance-consistency conditions used in Corollary~\ref{cor:target_consistency} \citep{kennedy2023towards}.

Table~\ref{tab:crossfit} reports a 5-seed diagnostic comparing the standard split implementation to a cross-fitted variant. Cross-fitting leaves coverage and width nearly unchanged in these benchmarks while increasing runtime by roughly a factor of two. These diagnostics are not directly comparable to the 10-seed main results in Table~\ref{tab:main_results}, but they support using the simpler split implementation in the main experiments.

\begin{table}[htbp]
\centering
\small
\caption{Cross-fitting comparison over 5 seeds. Cross-fitting is performed inside $\mathcal{D}_{\mathrm{train}}$ and preserves the same calibration split, so the split-conformal pseudo-outcome guarantee is unchanged.}
\label{tab:crossfit}
\begin{tabular}{llccc}
\toprule
\textbf{Dataset} & \textbf{Mode} & \textbf{Coverage} & \textbf{Width} & \textbf{Time (s)} \\
\midrule
IHDP     & standard & $100.00 \pm 0.00$ & 10.856 & 5.2 \\
IHDP     & crossfit & $100.00 \pm 0.00$ & 10.851 & 10.8 \\
Sachs    & standard & $91.90 \pm 2.07$  & 4.316  & 7.0 \\
Sachs    & crossfit & $91.90 \pm 2.07$  & 4.317  & 13.8 \\
ACIC2019 & standard & $90.88 \pm 6.89$  & 4.926  & 5.1 \\
ACIC2019 & crossfit & $92.64 \pm 7.81$  & 4.926  & 10.6 \\
Jobs     & standard & $92.86 \pm 2.33$  & 34{,}727 & 5.1 \\
Jobs     & crossfit & $92.86 \pm 2.33$  & 34{,}739 & 10.4 \\
Twins    & standard & $97.14 \pm 0.09$  & 0.181  & 19.0 \\
Twins    & crossfit & $97.14 \pm 0.09$  & 0.181  & 32.0 \\
\bottomrule
\end{tabular}
\end{table}

\section{Additional Experimental Tables}
\label{app:extra_experiments}

This appendix collects secondary experimental diagnostics referenced in the main paper but omitted there for space.

\subsection{Graph Recovery and Prior Robustness Diagnostics}
\label{app:graph_prior_diagnostics}

Table~\ref{tab:graph_recovery} reports edge-level recovery on Sachs, and Table~\ref{tab:oracle} compares the downstream interval estimator against an oracle-graph DR baseline. These diagnostics separate graph recovery from interval quality: exact recovery of every edge is not required when the ensemble contains adjustment sets that are adequate for estimation.

\begin{table}[htbp]
\small
\centering
\caption{Graph recovery accuracy on the Sachs protein signaling network (17 ground-truth edges). Despite low F1, \cg's BMA ensemble remains competitive for downstream interval estimation because graphs are weighted by observational fit rather than selected solely by edge recovery quality.}
\label{tab:graph_recovery}
\begin{tabular}{lccccccc}
\toprule
& \textbf{Prec.} & \textbf{Recall} & \textbf{F1} & \textbf{SHD} & \textbf{TP} & \textbf{FP} & \textbf{FN} \\
\midrule
\cg & 0.091 & 0.118 & 0.103 & 33 & 2 & 20 & 15 \\
\bottomrule
\end{tabular}
\end{table}

\begin{table}[htbp]
\small
\centering
\caption{Oracle graph baseline on Sachs. The Oracle uses the 17 ground-truth edges from \citet{sachs2005causal}; \cg uses its LLM+BIC ensemble. Despite low graph recovery F1 (Table~\ref{tab:graph_recovery}), \cg achieves higher coverage with moderately wider intervals, because the BMA ensemble hedges against graph misspecification that a single-graph Oracle cannot.}
\label{tab:oracle}
\begin{tabular}{lccc}
\toprule
\textbf{Method} & \textbf{Coverage (\%)} & \textbf{Width} & \textbf{Struct.\ Unc.} \\
\midrule
Conformal Oracle DR & 92.8 & 3.90 & 0.0 \\
\textbf{\cg}        & \textbf{95.7} & 4.36 & $\approx 0$ \\
\bottomrule
\end{tabular}
\end{table}

\begin{table}[htbp]
\small
\centering
\caption{Adversarial prior robustness on IHDP ($N\!=\!672$). Coverage does not collapse under random or flipped priors, indicating that BIC reweighting and conformal calibration strongly correct poor proposals in this regime.}
\label{tab:adversarial}
\begin{tabular}{lccc}
\toprule
\textbf{Prior Variant} & \textbf{Coverage (\%)} & \textbf{Width} & \textbf{RMSE} \\
\midrule
Oracle                 & 100.0 & 11.80 & 0.903 \\
LLM                    & 100.0 & 11.81 & 0.904 \\
Random Uniform         & 100.0 & 10.92 & 0.859 \\
Adversarial (Flipped)  & 100.0 & 11.81 & 0.903 \\
\bottomrule
\end{tabular}
\end{table}

\begin{table}[htbp]
\small
\centering
\caption{Prior washout on IHDP. As $N$ grows, the BIC likelihood dominates the LLM prior and structural uncertainty becomes negligible, consistent with asymptotic concentration of the posterior weights.}
\label{tab:washout}
\begin{tabular}{ccc}
\toprule
$N$ & \textbf{Width} & \textbf{Struct.\ Unc.} \\
\midrule
100  & 106.61 & $1.2 \times 10^{-13}$ \\
250  & 12.00  & $\approx 0$ \\
500  & 12.59  & $\approx 0$ \\
672  & 11.81  & $\approx 0$ \\
1000 & 11.81  & $\approx 0$ \\
\bottomrule
\end{tabular}
\end{table}

\subsection{Clean Synthetic Ablation with Known Structural Roles}
\label{app:clean_synthetic_ablation}

Table~\ref{tab:clean_ablation_synthetic} uses a synthetic DGP with known confounders, collider-like variables, and noise covariates. The oracle structural prior encodes the known variable roles and is used only to illustrate a regime where proposal quality matters. The table should not be read as evidence that the LLM prior is itself oracle-quality; rather, it shows that when structural information correctly separates confounders from collider-like variables, removing that information can produce narrower but less reliable intervals in several settings. The best component choice is not uniform across sample sizes.

\begin{table*}[htbp]
\centering
\small
\caption{Synthetic DGP with 5 true confounders, 4 collider-like variables $K_j=aT+bY+\epsilon$, and 4 noise variables. The oracle prior encodes the known variable roles and illustrates a regime where structural proposal quality matters. Removing the structural prior can produce narrower intervals and, in several settings, larger RMSE or below-nominal coverage on some seeds; the best component choice is not uniform across sample sizes.}
\label{tab:clean_ablation_synthetic}
\begin{tabular}{llccc}
\toprule
\textbf{$N$} & \textbf{Variant} & \textbf{Coverage (\%)} & \textbf{Width} & \textbf{RMSE} \\
\midrule
\multirow{4}{*}{200}
 & Full \cg (oracle prior)     & $\mathbf{100.0 \pm 0.0}$ & $11.49 \pm 10.86$ & $\mathbf{0.90 \pm 0.23}$ \\
 & No CI Pruning               & $100.0 \pm 0.0$          & $14.17 \pm 8.83$  & $1.03 \pm 0.43$ \\
 & No Ensemble (Top-1)         & $100.0 \pm 0.0$          & $11.49 \pm 10.86$ & $0.90 \pm 0.23$ \\
 & No LLM Prior (Uniform)      & $95.2 \pm 9.6$           & $8.75 \pm 3.94$   & $1.59 \pm 1.11$ \\
\midrule
\multirow{4}{*}{500}
 & Full \cg (oracle prior)     & $\mathbf{100.0 \pm 0.0}$ & $6.35 \pm 1.09$ & $0.75 \pm 0.14$ \\
 & No CI Pruning               & $100.0 \pm 0.0$          & $6.62 \pm 1.32$ & $\mathbf{0.71 \pm 0.17}$ \\
 & No Ensemble (Top-1)         & $100.0 \pm 0.0$          & $6.35 \pm 1.09$ & $0.75 \pm 0.14$ \\
 & No LLM Prior (Uniform)      & $94.2 \pm 11.5$          & $5.95 \pm 2.68$ & $0.96 \pm 0.35$ \\
\midrule
\multirow{4}{*}{1000}
 & Full \cg (oracle prior)     & $\mathbf{100.0 \pm 0.0}$ & $5.71 \pm 0.92$ & $0.63 \pm 0.15$ \\
 & No CI Pruning               & $100.0 \pm 0.0$          & $5.63 \pm 0.75$ & $\mathbf{0.46 \pm 0.15}$ \\
 & No Ensemble (Top-1)         & $100.0 \pm 0.0$          & $5.71 \pm 0.92$ & $0.63 \pm 0.15$ \\
 & No LLM Prior (Uniform)      & $96.2 \pm 5.2$           & $4.57 \pm 0.70$ & $0.92 \pm 0.27$ \\
\bottomrule
\end{tabular}
\end{table*}

\begin{table}[htbp]
\centering
\caption{Classical structure-discovery baselines on IHDP. Each method
discovers a single DAG which is then fed into the same split-conformal
DR pipeline used by \cg. At $N\!=\!672$ all three methods
converge to near-identical estimates.}
\label{tab:structure_baselines}
\begin{tabular}{lcccc}
\toprule
\textbf{Method} & \textbf{\#Edges} & \textbf{Cov.\ (\%)} & \textbf{Width} & \textbf{RMSE} \\
\midrule
PC + Conformal DR              & 40       & 100.0 & 11.809 & 0.904 \\
GES + Conformal DR             & 32       & 100.0 & 11.809 & 0.904 \\
\midrule
\textbf{\cg (LLM+BMA)} & ensemble & 100.0 & 11.810 & 0.903 \\
\bottomrule
\end{tabular}%
\end{table}

We report PC/GES comparisons only on IHDP because it is the setting where both discovery algorithms returned usable single-DAG inputs under the same conformal DR pipeline and where true-CATE coverage is directly evaluable. Extending this comparison across all datasets is a useful diagnostic, but in our current benchmark suite the classical discovery baselines either failed to produce tractable valid adjustment sets or lacked a directly comparable ground-truth CATE coverage target.

\begin{table}[htbp]
\centering
\caption{Variable anonymization test on IHDP. Replacing all covariate
names with $V_1,\ldots,V_{25}$ and using a generic domain description
reduces the informativeness of the elicited prior ($\sigma$ drops from
$0.036$ to $0.025$) but leaves the final estimator unchanged. This
defuses the ``LLM is pattern-matching variable names'' attack at
$N\!=\!672$: the BIC likelihood dominates and any extractable prior
signal is dominated by the data.}
\label{tab:anonymization}
\begin{tabular}{lcccc}
\toprule
\textbf{Condition} & \textbf{Cov.\ (\%)} & \textbf{Width} & \textbf{RMSE} & \textbf{Prior $\sigma$} \\
\midrule
Original (informative names)     & 100.0 & 11.807 & 0.903 & 0.036 \\
Anonymized ($V_1,\ldots,V_{25}$) & 100.0 & 11.807 & 0.903 & 0.025 \\
\bottomrule
\end{tabular}%
\end{table}

\begin{table}[htbp]
\centering
\caption{Interval width as a joint function of sample size and LLM backbone
on IHDP subsamples. At $N\!=\!50$ the four models span a $9\%$ width
range with no monotone relationship to model size. By $N\!=\!200$ all
four models produce identical widths. The LLM prior's marginal value is
confined to the very-low-data regime, and is within noise even there.}
\label{tab:small_n_scaling}
\begin{tabular}{lccc}
\toprule
\textbf{Model}   & $N{=}50$ & $N{=}100$ & $N{=}200$ \\
\midrule
Qwen3-4B         & \textbf{5.28} & 15.74 & 10.31 \\
GPT-OSS-20B      & 5.69          & 15.74 & 10.31 \\
Qwen3-30B        & 5.73          & 15.73 & 10.31 \\
GPT-OSS-120B     & 5.40          & \textbf{15.62} & 10.31 \\
\bottomrule
\end{tabular}%
\end{table}

\begin{table}[htbp]
\centering
\caption{LLM prior scaling on IHDP across four model sizes ($4$B--$120$B).
All models achieve $100\%$ coverage with nearly identical widths
($11.81 \pm 0.003$), confirming that the Gaussian BIC dominates the
prior at $N\!=\!672$.  The $4$B model is the only one that produces
detectable (but negligible) structural uncertainty ($5 \times 10^{-5}$),
indicating its prior is slightly less decisive.  In practice, even a
quantized $4$B model provides a sufficient prior for \cg.}
\label{tab:scaling}
\begin{tabular}{lcccc}
\toprule
\textbf{Model} & \textbf{Params} & \textbf{Width} & \textbf{RMSE} & \textbf{Struct.\ Unc.} \\
\midrule
Qwen3-4B      &   4B & 11.81 & 0.903 & $5 \times 10^{-5}$ \\
GPT-OSS-20B   &  20B & 11.81 & 0.903 & $\approx 0$ \\
Qwen3-30B     &  30B & 11.81 & 0.903 & $\approx 0$ \\
GPT-OSS-120B  & 120B & 11.81 & 0.903 & $ 0$ \\
\bottomrule
\end{tabular}%
\end{table}

\begin{table}[htbp]
\centering
\caption{Wall-clock runtime (seconds) per pipeline stage on a single
H200 GPU.  The total cost ranges from $3.4$s (Sachs) to $22.3$s
(Twins, $N\!=\!71{,}345$).  LLM prior extraction is a one-time offline
cost ($0.1$--$9.6$s); the dominant cost is CATE estimation with $K\!=\!5$
graph-conditional DR learners ($2$--$9$s, scaling linearly in $N$).
DAG sampling is negligible ($<\!0.01$s).}
\label{tab:runtime}
\begin{tabular}{lccccccc}
\toprule
\textbf{Dataset} & $N$ & $p$ & \textbf{LLM} & \textbf{DAG} & \textbf{Prune} & \textbf{CATE} & \textbf{Total} \\
\midrule
IHDP     &    672 &  25 & 3.1  & $<$0.01 & 0.13  & 2.04  & 5.6  \\
Sachs    &  5{,}400 &  10 & 0.1  & $<$0.01 & 0.20  & 2.76  & 3.4  \\
ACIC2019 &    500 &  22 & 2.9  & $<$0.01 & 0.10  & 2.00  & 5.3  \\
Jobs     &    445 &   8 & 1.4  & $<$0.01 & 0.03  & 1.99  & 3.6  \\
Twins    & 71{,}345 &  52 & 9.6  & 0.01    & 0.40  & 9.18  & 22.3 \\
\bottomrule
\end{tabular}%
\end{table}

\begin{table}[htbp]
\centering
\caption{Seed stability over $10$ random seeds per dataset. We report the
mean, standard deviation, and range of empirical coverage as well as the
mean and standard deviation of interval width. Mean coverage meets or
exceeds the $90\%$ nominal target on every dataset. Low variance on
IHDP, Twins, and Sachs indicates stable behavior. ACIC2019 and Jobs show
larger dispersion, with ACIC2019 the most sensitive benchmark.}
\label{tab:seed_stability}
\begin{tabular}{lccccc}
\toprule
\textbf{Dataset} & \textbf{Cov.\ Mean} & \textbf{Cov.\ Std} & \textbf{Cov.\ Range} & \textbf{Width Mean} & \textbf{Width Std} \\
\midrule
IHDP     & 99.94 & 0.18 & [99.40, 100.00] & 10.87    & 1.09    \\
Sachs    & 92.91 & 1.79 & [90.89, 95.70]  & 4.26     & 0.25    \\
ACIC2019 & 93.28 & 3.74 & [84.80, 100.00] & 4.37     & 2.21    \\
Jobs     & 93.48 & 2.80 & [89.29, 98.21]  & 36{,}936 & 4{,}687 \\
Twins    & 97.12 & 0.08 & [96.99, 97.28]  & 0.181    & 0.003   \\
\bottomrule
\end{tabular}%
\end{table}

\begin{table}[htbp]
\centering
\small
\caption{Empirical coverage (\%) at six nominal target levels across all
five datasets on a representative calibration split. This sweep is a
calibration diagnostic and is not directly comparable to the 10-seed
means in Table~\ref{tab:main_results}. At the default $\alpha\!=\!0.10$
level (highlighted), every dataset attains at least $91.1\%$ empirical
coverage, and at tighter targets ($\alpha\!\leq\!0.05$) coverage is
uniformly conservative.}
\label{tab:calibration}
\begin{tabular}{lcccccc}
\toprule
\textbf{Dataset} & $\alpha{=}0.01$ & $\alpha{=}0.05$ & \cellcolor{blue!10}$\alpha{=}0.10$ & $\alpha{=}0.20$ & $\alpha{=}0.30$ & $\alpha{=}0.50$ \\
                 & (99\%) & (95\%) & \cellcolor{blue!10}(\textbf{90\%}) & (80\%) & (70\%) & (50\%) \\
\midrule
IHDP     & 100.0 & 100.0 & \cellcolor{blue!10}100.0 & 98.2 & 98.2 & 94.6 \\
Sachs    & 99.3  & 99.3  & \cellcolor{blue!10}93.2  & 81.0 & 69.8 & 64.2 \\
ACIC2019 & 100.0 & 94.4  & \cellcolor{blue!10}94.4  & 77.6 & 77.6 & 56.0 \\
Jobs     & 100.0 & 96.4  & \cellcolor{blue!10}91.1  & 88.4 & 68.8 & 53.6 \\
Twins    & 100.0 & 97.1  & \cellcolor{blue!10}97.1  & 97.1 & 97.1 & 97.1 \\
\bottomrule
\end{tabular}%
\end{table}

\section{Additional Prior and Component Sensitivity Results}
\label{app:prior-component}

\subsection{Prior-centric small-sample ablation}
\label{app:prior-ablation}

This section reports the full prior-centric ablation summarized in
Section~\ref{sec:prior-sensitivity}. We vary the sample size and the
proposal prior used by \cg. \textsc{LLM} uses elicited edge
probabilities. \textsc{Uniform} sets
$P_{\mathrm{prior}}(u \to v)=0.5$. \textsc{Inverted} replaces each
elicited probability $p$ by $1-p$, clipped to
$[\epsilon,1-\epsilon]$. All reported claims in the main text are based
only on observed cells. Cells with missing entries correspond to runs
that were not completed because of memory or wall-clock limits.

\begin{table*}[p]
\centering
\caption{
Full observed prior-centric ablation. Each entry reports
coverage percentage / mean width. Em-dash entries indicate unavailable
runs or settings where coverage is not estimable. The main text uses a
compact subset of the observed low-data cells; this table provides the
complete observed data.
}
\label{tab:prior-ablation-full}
\tiny
\setlength{\tabcolsep}{3pt}
\begin{tabular}{llrcccccccc}
\toprule
Dataset & $N$ & Seeds &
\textsc{CG-LLM} & \textsc{CG-Unif} & \textsc{CG-Inv} &
\textsc{Conf-DR} & \textsc{CForest} & \textsc{XLearn} &
\textsc{PC-DR} & \textsc{GES-DR} \\
\midrule
IHDP      & 50   & 1 & 100 / 5.5  & 100 / 5.2  & 100 / 5.5 & 92 / 167.0 & 100 / 100.9 & -- & -- & -- \\
Sachs     & 50   & 5 & 97 / 5.2   & 88 / 5.2   & 92 / 5.1  & 95 / 27.1  & 97 / 21.8   & -- & 100 / 5.2 & 100 / 5.6 \\
Sachs     & 100  & 5 & 93 / 4.8   & 93 / 4.8   & 95 / 5.2  & 96 / 20.3  & 96 / 18.3   & -- & 92 / 4.6 & 92 / 5.0 \\
Sachs     & 200  & 5 & 92 / 4.5   & 94 / 4.4   & 94 / 4.7  & 91 / 8.3   & 92 / 7.2    & -- & 96 / 4.3 & 93 / 4.4 \\
Sachs     & 500  & 5 & 96 / 4.3   & 95 / 4.4   & 95 / 4.7  & 92 / 3.8   & 90 / 3.2    & -- & 95 / 4.0 & 97 / 4.2 \\
Sachs     & full & 5 & 92 / 4.3   & 94 / 4.3   & 91 / 3.6  & 90 / 2.3   & 91 / 2.3    & -- & 91 / 2.5 & 94 / 4.4 \\
Jobs      & 50   & 5 & 95 / $1.5{\times}10^5$ & 97 / $1.5{\times}10^5$ & 97 / $1.5{\times}10^5$ & 98 / $1.6{\times}10^6$ & 95 / $1.2{\times}10^6$ & -- & 96 / $9.6{\times}10^4$ & 94 / $1.3{\times}10^5$ \\
Jobs      & 100  & 5 & 89 / $4.4{\times}10^4$ & 89 / $4.5{\times}10^4$ & 89 / $4.4{\times}10^4$ & 97 / $4.3{\times}10^5$ & 98 / $4.0{\times}10^5$ & -- & 89 / $4.4{\times}10^4$ & 89 / $4.4{\times}10^4$ \\
Jobs      & 200  & 5 & 96 / $4.5{\times}10^4$ & 96 / $4.5{\times}10^4$ & 95 / $4.5{\times}10^4$ & 93 / $1.8{\times}10^5$ & 93 / $1.6{\times}10^5$ & -- & 96 / $4.5{\times}10^4$ & 96 / $4.5{\times}10^4$ \\
Jobs      & full & 5 & 93 / $3.5{\times}10^4$ & 93 / $3.5{\times}10^4$ & 93 / $3.5{\times}10^4$ & 90 / $7.4{\times}10^4$ & 92 / $7.6{\times}10^4$ & -- & 93 / $3.5{\times}10^4$ & 93 / $3.5{\times}10^4$ \\
ACIC2019  & 50   & 2 & 96 / 3.7   & 96 / 3.7   & 96 / 3.7  & 100 / 122.1 & 92 / 21.3 & -- & 92 / 2.0 & 92 / 2.0 \\
Twins     & 50   & 1 & 100 / 4.2  & --         & --        & --          & --        & -- & -- & -- \\
Synthetic & 50   & 5 & 100 / 24.0 & 100 / 23.8 & 100 / 23.7 & 100 / 74.5 & 100 / 71.6 & -- & 100 / 23.6 & 100 / 42.4 \\
Synthetic & 100  & 5 & 100 / 23.6 & 100 / 14.6 & 100 / 14.5 & 100 / 74.4 & 100 / 58.8 & 87 / 3.1 & 100 / 14.5 & 100 / 28.2 \\
Synthetic & 200  & 5 & 100 / 16.1 & 100 / 11.6 & 100 / 11.6 & 100 / 32.7 & 100 / 28.1 & 83 / 1.9 & 100 / 11.6 & 100 / 24.7 \\
Synthetic & 500  & 5 & 100 / 12.8 & 100 / 10.9 & 100 / 10.9 & 100 / 9.2  & 100 / 8.8  & 80 / 1.2 & 100 / 10.9 & 100 / 12.6 \\
Synthetic & full & 5 & 100 / 9.3  & 100 / 11.7 & 100 / 11.7 & 100 / 4.7  & 100 / 4.8  & 81 / 0.7 & 100 / 11.7 & 100 / 9.6 \\
\bottomrule
\end{tabular}
\end{table*}

\paragraph{Interpretation.}
The full table shows that the low-data efficiency advantage is not
uniformly attributable to the LLM prior alone. In many real-data cells,
\textsc{Uniform} and \textsc{Inverted} priors produce similar intervals
to \textsc{LLM}. This supports the intended design of \cg: the
semantic prior proposes candidate graphs, while BIC weighting and
conformal calibration determine whether those candidates are useful. The
synthetic SCM shows the clearest prior dependence, which is expected
because changing the prior changes which known structural roles are
explored.

\subsection{Full-scale component sensitivity}
\label{app:component-ablation}

Table~\ref{tab:component-ablation-full} reports the full component
sensitivity analysis over 10 random seeds. We keep this table in the
appendix because it mainly supports the interpretation that many
benchmark-scale settings are BIC-dominated. The table should be read as
a sensitivity analysis rather than as evidence that every component
independently improves performance on every dataset.

\begin{table*}[p]
\centering
\caption{
Full-scale component sensitivity over 10 random seeds. Values report
coverage mean $\pm$ standard deviation and width mean $\pm$ standard
deviation. Across most datasets, variants are similar because BIC
weighting often concentrates on data-supported adjustment strategies.
ACIC2019 shows the strongest sensitivity to graph proposal choices.
}
\label{tab:component-ablation-full}
\small
\setlength{\tabcolsep}{5pt}
\begin{tabular}{llcc}
\toprule
Dataset & Variant & Coverage (\%) & Width \\
\midrule
IHDP
& Full CAUSALGUARD          & $99.94 \pm 0.18$ & $10.87 \pm 1.09$ \\
& No LLM Prior              & $99.82 \pm 0.27$ & $10.65 \pm 1.20$ \\
& No CI Pruning             & $99.94 \pm 0.18$ & $10.87 \pm 1.09$ \\
& No Ensemble, Top-1        & $99.94 \pm 0.18$ & $10.87 \pm 1.08$ \\
\midrule
Sachs
& Full CAUSALGUARD          & $92.91 \pm 1.79$ & $4.26 \pm 0.25$ \\
& No LLM Prior              & $93.29 \pm 2.27$ & $4.30 \pm 0.20$ \\
& No CI Pruning             & $92.99 \pm 2.47$ & $4.20 \pm 0.25$ \\
& No Ensemble, Top-1        & $93.41 \pm 1.96$ & $4.26 \pm 0.25$ \\
\midrule
ACIC2019
& Full CAUSALGUARD          & $93.28 \pm 3.74$ & $4.37 \pm 2.21$ \\
& No LLM Prior              & $91.12 \pm 6.28$ & $4.79 \pm 2.37$ \\
& No CI Pruning             & $92.40 \pm 6.87$ & $4.37 \pm 2.21$ \\
& No Ensemble, Top-1        & $95.36 \pm 2.79$ & $4.37 \pm 2.21$ \\
\midrule
Jobs
& Full CAUSALGUARD          & $93.48 \pm 2.80$ & $36{,}936 \pm 4{,}687$ \\
& No LLM Prior              & $93.48 \pm 2.80$ & $36{,}929 \pm 4{,}681$ \\
& No CI Pruning             & $93.48 \pm 2.80$ & $36{,}925 \pm 4{,}686$ \\
& No Ensemble, Top-1        & $93.48 \pm 2.80$ & $36{,}927 \pm 4{,}686$ \\
\midrule
Twins
& Full CAUSALGUARD          & $97.12 \pm 0.08$ & $0.181 \pm 0.003$ \\
& No Ensemble, Top-1        & $97.12 \pm 0.08$ & $0.181 \pm 0.003$ \\
& No LLM Prior              & -- & -- \\
& No CI Pruning             & -- & -- \\
\bottomrule
\end{tabular}
\end{table*}

For Twins, the uniform-prior and no-pruning variants were not completed
within the wall-clock budget. With $|V|=54$, a uniform
Bernoulli$(0.5)$ prior samples hundreds of forward edges per graph, which
makes backdoor-set identification substantially more expensive. We report
this limitation rather than special-case the algorithm.

\subsection{Additional Sensitivity Diagnostics}
\label{app:additional_sensitivity}

Table~\ref{tab:acic_ci_sweep} varies the CI-pruning threshold on ACIC2019. The sweep suggests that the low-coverage behavior on some seeds is not explained solely by the nominal CI threshold; downstream graph weighting and adjustment-set quality also matter. Table~\ref{tab:k_dependence} studies the number of sampled DAGs $K$. Runtime scales approximately linearly in $K$, while coverage and width are nearly invariant on IHDP and ACIC2019, indicating rapid BIC concentration once the ensemble contains a data-supported adjustment set.

\begin{table}[htbp]
\centering
\small
\caption{ACIC2019 sweep over CI-pruning thresholds. This representative diagnostic is separate from the 10-seed main benchmark in Table~\ref{tab:main_results}.}
\label{tab:acic_ci_sweep}
\begin{tabular}{lccc}
\toprule
\textbf{$\alpha_{\mathrm{CI}}$} & \textbf{Coverage} & \textbf{Width} & \textbf{Edges/graph} \\
\midrule
0.01 & $88.3 \pm 7.2$  & 4.93 & 2.7 \\
0.05 & $92.6 \pm 7.8$  & 4.93 & 4.2 \\
0.10 & $89.1 \pm 10.2$ & 4.93 & 5.2 \\
0.20 & $92.6 \pm 7.8$  & 4.93 & 6.6 \\
\bottomrule
\end{tabular}
\end{table}

\begin{table}[htbp]
\centering
\small
\caption{Dependence on the number of candidate DAGs. Runtime scales approximately linearly, while BIC concentration makes coverage and width nearly invariant in these representative diagnostics.}
\label{tab:k_dependence}
\begin{tabular}{llcccc}
\toprule
\textbf{Dataset} & \textbf{$K$} & \textbf{Coverage} & \textbf{Width} & \textbf{Time (s)} & \textbf{Struct.} \\
\midrule
IHDP & 1  & $100.0 \pm 0.0$ & 10.852 & 1.1  & 0 \\
IHDP & 3  & 100.0           & 10.855 & 3.2  & $\approx 0$ \\
IHDP & 5  & 100.0           & 10.852 & 5.2  & $\approx 0$ \\
IHDP & 10 & 100.0           & 10.854 & 10.5 & $\approx 0$ \\
IHDP & 20 & 100.0           & 10.852 & 20.8 & $\approx 0$ \\
\midrule
ACIC2019 & 1  & $96.0 \pm 2.3$ & 4.926 & 1.0  & 0 \\
ACIC2019 & 3  & $92.5 \pm 8.5$ & 4.925 & 3.0  & $\approx 0$ \\
ACIC2019 & 5  & $96.0 \pm 2.3$ & 4.928 & 5.1  & $\approx 0$ \\
ACIC2019 & 10 & $94.2 \pm 1.9$ & 4.926 & 10.2 & $\approx 0$ \\
ACIC2019 & 20 & $92.6 \pm 7.8$ & 4.926 & 20.2 & $\approx 0$ \\
\bottomrule
\end{tabular}
\end{table}

 \subsection{Implementation Details}                                                                                                                                                                                
  \label{app:implementation} 
  This subsection consolidates every numerical and algorithmic choice required                                                                                                                                       
  to reproduce CausalGuard. 
  \paragraph{Data splits.}                                                                                                                                                                                           
  For every dataset and every seed we draw a single three-way partition
  $\mathcal{D}_{\mathrm{tr}} : \mathcal{D}_{\mathrm{ca}} : \mathcal{D}_{\mathrm{te}}                                                                                                                                 
  = 50\% : 25\% : 25\%$ via two stratified-free \texttt{train\_test\_split} calls                                                                                                                                    
  (\texttt{sklearn.model\_selection}) seeded by the run seed. The same split is                                                                                                                                      
  shared across all four ablation variants so that all comparisons are paired.                                                                                                                                       
  The library default in \texttt{run\_causalguard\_pipeline} is $60{:}20{:}20$;                                                                                                                                      
  benchmark drivers override this to $50{:}25{:}25$ for stricter calibration.  
  \paragraph{Seed protocol.}                                                                                                                                                                                         
  Main results use the ten-seed set                                                                                                                                                                                  
  $\mathcal{S}_{10} = \{42, 123, 456, 789, 1024, 2048, 3333, 7777, 9999, 31415\}$.                                                                                                                                   
  Small-$N$ scaling and prior-ablation studies use the five-seed prefix                                                                                                                                              
  $\mathcal{S}_{5} = \{42, 123, 456, 789, 1024\}$. Each seed deterministically                                                                                                                                       
  controls (i) the data split, (ii) the topological orders used in DAG sampling,                                                                                                                                     
  (iii) Bernoulli edge draws, and (iv) every random-state argument of                                                                                                                                                
  \texttt{GradientBoostingClassifier} / \texttt{GradientBoostingRegressor} (which                                                                                                                                    
  we additionally fix to \texttt{random\_state=42} for cross-fitted nuisances so                                                                                                                                     
  that DR-Learner internals are deterministic given the split).  
  \paragraph{Hyperparameters of the conformal pipeline.}                                                                                                                                                             
  \begin{itemize}                                                                                                                                                                                                    
    \item Number of sampled DAGs: $K = 5$.                                                                                                                                                                           
    \item Target miscoverage: $\alpha = 0.10$ (i.e., $90\%$ nominal coverage).                                                                                                                                       
    \item Conditional-independence pruning level: $\alpha_{\mathrm{CI}} = 0.05$.                                                                                                                                     
    \item Propensity clipping: $\varepsilon = 0.05$, so $\hat e(x)$ is clipped to                                                                                                                                    
          $[\varepsilon,\,1-\varepsilon]$ both inside the DR-Learner                                                                                                                                                 
          (\texttt{min\_propensity}) and inside our pseudo-outcome computation.                                                                                                                                      
    \item Edge-probability clipping (numerical stability): every elicited                                                                                                                                            
          $p_{u\to v}$ is clamped to $[10^{-4},\,1-10^{-4}]$ before sampling and                                                                                                                                     
          to $[10^{-6},\,1-10^{-6}]$ inside the structural prior log-sum.                                                                                                                                            
  \end{itemize}                                                                                                                                    
  \paragraph{Nuisance models.}                                                                                                                                                                                       
  All nuisance functions are gradient-boosted trees from
  \texttt{scikit-learn}, configured identically:                                                                                                                                                                     
  \begin{itemize}
    \item Propensity $\hat e(x)$:                                                                                                                                                                                    
          \texttt{GradientBoostingClassifier(n\_estimators=100, max\_depth=3,\ \allowbreak random\_state=42)}.
    \item Outcome regressions $\hat\mu_0(x), \hat\mu_1(x)$:                                                                                                                                                          
          \texttt{GradientBoostingRegressor(n\_estimators=100, max\_depth=3,\ \allowbreak random\_state=42)}.                                                                                                        
    \item DR-Learner: \texttt{econml.dr.LinearDRLearner} with the propensity and                                                                                                                                     
          regression models above and $K_{\mathrm{cf}}=5$-fold cross-fitting.                                                                                                                                        
          When the smaller arm has fewer than five units (small-$N$ regimes),                                                                                                                                        
          $K_{\mathrm{cf}}$ is reduced to $\max(\min(n_0, n_1),\, 1)$ to keep                                                                                                                                        
          the fit feasible. All nuisance models are fit on                                                                                                                                                           
          $\mathcal{D}_{\mathrm{tr}}$ only.                                                                                                                                                                          
  \end{itemize}                                                                                                                                                                                                      
                  
  \paragraph{Pseudo-outcomes and base intervals.}                                                                                                                                                                    
  On $\mathcal{D}_{\mathrm{tr}}$ each candidate graph $G_k$ yields its
  adjustment-set indices $A_k$ and a fitted DR-Learner. The DR pseudo-outcome                                                                                                                                        
  for an example $(x, t, y)$ is                                                                                                                                                                                      
  \begin{equation*}                                                                                                                                                                                                  
    \gamma_k(x,t,y)                                                                                                                                                                                                  
    = \hat\mu_{1,k}(x) - \hat\mu_{0,k}(x)                                                                                                                                                                            
      + \frac{t\,(y - \hat\mu_{1,k}(x))}{\mathrm{clip}(\hat e_k(x),\,\varepsilon,\,1-\varepsilon)}                                                                                                                   
      - \frac{(1-t)\,(y - \hat\mu_{0,k}(x))}{1 - \mathrm{clip}(\hat e_k(x),\,\varepsilon,\,1-\varepsilon)}.                                                                                                          
  \end{equation*}                                                                                                                                                                                                    
  Per-graph quantile bounds $[\hat q^{\mathrm{lo}}_k(x),\,\hat q^{\mathrm{hi}}_k(x)]$                                                                                                                                
  are read off via \texttt{LinearDRLearner.effect\_interval(...,\ alpha=$\alpha$)},                                                                                                                                  
  which returns the $\alpha/2$ and $1-\alpha/2$ asymptotic bounds on the linear                                                                                                                                      
  DR effect.  
  \paragraph{Composite conformal score and quantile.}                                                                                                                                                                
  For calibration points indexed by $i \in \mathcal{D}_{\mathrm{ca}}$ we compute
  the BMA-aggregated lower and upper miscoverage residuals                                                                                                                                                           
  \begin{equation*}                                                                                                                                                                                                  
    L_i = \sum_{k=1}^{K} w_k\bigl(\hat q^{\mathrm{lo}}_k(x_i) - \gamma_k(x_i,t_i,y_i)\bigr),                                                                                                                         
    \qquad                                                                                                                                                                                                           
    U_i = \sum_{k=1}^{K} w_k\bigl(\gamma_k(x_i,t_i,y_i) - \hat q^{\mathrm{hi}}_k(x_i)\bigr),                                                                                                                         
  \end{equation*}                                                                                                                                                                                                    
  and form the tight composite score $s_i = \max(L_i, U_i)$ (this is the                                                                                                                                             
  ``Jensen-tight'' form that swaps the order of weighted-sum and max relative to                                                                                                                                     
  the per-graph score). The conformal correction is                                                                                                                                                                  
  \begin{equation*}                                                                                                                                                                                                  
    \widehat Q                                                                                                                                                                                                       
    = \mathrm{Quantile}\!\left(\,                                                                                                                                                                                    
         \{s_i\}_{i \in \mathcal{D}_{\mathrm{ca}}},\;
         \min\!\Bigl(\tfrac{\lceil (1-\alpha)(n_{\mathrm{ca}}+1)\rceil}{n_{\mathrm{ca}}},\,1\Bigr)                                                                                                                   
      \right).                                                                                                                                                                                                       
  \end{equation*}                                                                                                                                                                                                    
  The final test interval at $x$ is                                                                                                                                                                                  
  $\bigl[\sum_k w_k \hat q^{\mathrm{lo}}_k(x) - \widehat Q,\;                                                                                                                                                        
         \sum_k w_k \hat q^{\mathrm{hi}}_k(x) + \widehat Q\bigr]$.    
  \paragraph{Topological-order sampling rule.}                                                                                                                                                                       
  Given the elicited edge probabilities $\{p_{u\to v}\}$, each of the $K$ DAGs is                                                                                                                                    
  drawn by the following deterministic-given-seed procedure:                                                                                                                                                         
  \begin{enumerate}                                                                                                                                                                                                  
    \item Initialise a NumPy \texttt{RandomState} with the run seed.
    \item Draw a uniform random permutation $\pi$ of all variables (covariates,
          $T$, $Y$).
    \item If $\pi(T) > \pi(Y)$, swap the positions of $T$ and $Y$ in $\pi$. This
          guarantees $T$ precedes $Y$ in topological order so that the protected
          edge $T \to Y$ is order-consistent.
    \item For each ordered pair $(u, v)$ with $\pi(u) < \pi(v)$, include the
          edge $u \to v$ with probability $p_{u\to v}$ via an independent
          Bernoulli draw.
    \item Force-add the edge $T \to Y$ to the resulting graph.
    \item Reject duplicates by hashing \texttt{frozenset(G.edges())}; up to
          $100K$ resamples are attempted before stopping. The first $K$ unique
          non-empty DAGs are kept.
  \end{enumerate}
  This procedure cannot produce cycles because every sampled edge respects
  $\pi$, and $T \to Y$ is consistent with the swap in step~3.

  \paragraph{CI-pruning rule (Fisher partial correlation).}
  Given a sampled DAG $G$ and $X_{\mathrm{tr}}$ (the train-only design matrix
  augmented with $T$ and $Y$ as columns), each edge $u \to v$ in $G$ is tested
  in turn:
  \begin{itemize}
    \item If $(u, v) = (T, Y)$, skip the test (the treatment edge is
          \emph{protected} to keep the estimand identifiable).
    \item Otherwise, set the conditioning set
          $\mathcal{C}_{u,v} = \mathrm{Pa}_G(v) \setminus \{u\}$, i.e.\ all
          current parents of $v$ in $G$ other than $u$. If
          $\mathcal{C}_{u,v} = \emptyset$, run a marginal Pearson correlation
          test between $u$ and $v$; otherwise run \texttt{pingouin.partial\_corr}
          with $x=u$, $y=v$, $\mathrm{covar} = \mathcal{C}_{u,v}$ (Fisher
          $z$-test under Gaussianity).
    \item If the returned $p$-value exceeds $\alpha_{\mathrm{CI}} = 0.05$, drop
          $u \to v$ from $G$.
  \end{itemize}
  Edges are visited in the order returned by \texttt{list(G.edges())}, and each
  test uses the parent set of the \emph{current} (in-progress) graph; the
  algorithm therefore prunes greedily in a single pass, which fixes the order
  of operations and makes the pruning result deterministic given $G$ and
  $X_{\mathrm{tr}}$.

  \paragraph{Adjustment-set selection and minimum-cardinality tie-breaking.}
  For each pruned graph $G_k$ we obtain a valid back-door adjustment set by
  calling \texttt{dowhy.CausalModel(...).identify\_effect()} on $G_k$ with
  treatment $T$ and outcome $Y$. \texttt{dowhy} returns one minimum-size valid
  back-door set if any exists. When several minimum sets exist, ties are broken
  deterministically by mapping the chosen set onto the column ordering of
  $X_{\mathrm{tr}}$:
  $A_k = \mathrm{sorted}\bigl(\{\,\mathrm{name2idx}[v] : v \in
  \mathrm{bvars}_k\,\}\bigr)$, i.e.\ adjustment indices are taken in ascending
  column order of \texttt{var\_names}. If \texttt{identify\_effect} returns no
  back-door set despite $T$ having parents in $G_k$, we fall back to the empty
  adjustment set $A_k = \emptyset$ and emit a warning; this graph is then
  weighted by its BIC + structural-prior score like any other.

  \paragraph{BMA weights.}
  Weights are computed entirely on $\mathcal{D}_{\mathrm{tr}}$:
  \begin{equation*}
    \log\tilde w_k
    = \mathrm{BIC}(G_k;\,\mathcal{D}_{\mathrm{tr}})
      + \sum_{(u,v) \in E(G_k)} \log p_{u\to v}                                                                                                                                                                      
      + \sum_{(u,v) \notin E(G_k)} \log(1 - p_{u\to v}),                                                                                                                                                             
  \end{equation*}                                                                                                                                                                                                    
  followed by the stabilised softmax $w_k \propto \exp(\log\tilde w_k -                                                                                                                                              
  \max_j \log\tilde w_j)$ with a floor of $-500$ on the shifted log-weight.                                                                                                                                          
  The BIC variant is auto-selected (\texttt{bic-d} if $Y$ is binary or more                                                                                                                                          
  than half of the columns have $\le 5$ unique values; otherwise \texttt{bic-g});                                                                                                                                    
  under \texttt{bic-d}, columns with more than 10 unique values are                                                                                                                                                  
  quantile-binned to 5 levels via \texttt{pandas.qcut(..., q=5,                                                                                                                                                      
  duplicates='drop')}. The absent-edge term in the structural prior is                                                                                                                                               
  load-bearing: without it, dense graphs are arbitrarily favoured. 
  \paragraph{LLM prompt (chat-JSON path).}                                                                                                                                                                           
  For instruct-tuned and hosted models we issue a single chat-completion call                                                                                                                                        
  with the system message \texttt{"Return ONLY valid JSON."} and the user                                                                                                                                            
  template        
  \begin{quote}\small\ttfamily                                                                                                                                                                                       
  \{domain\_description\}\\[2pt]                                                                                                                                                                                     
  For each directed edge below, give a probability in [0,1] that this causal                                                                                                                                         
  relationship exists in the true DAG.\\[2pt]                                                                                                                                                                        
  IMPORTANT: Output ONLY a JSON object. No reasoning.\\                                                                                                                                                              
  Example: \{"A->B": 0.8, "B->C": 0.1\}\\[2pt]                                                                                                                                                                       
  Edges:\\                                                                                                                                                                                                           
  \{comma-separated list of "u->v" pairs\}                                                                                                                                                                           
  \end{quote}                                                                                                                                                                                                        
  where \texttt{\{domain\_description\}} is the dataset-specific sentence (e.g., for IHDP: ``A study on the effect of home  
  visits on the cognitive development of premature infants.''). Pairs touching 
  $\{T, Y\}$ are listed first, and the list is truncated to $120$ pairs when                                                                                                                                         
  $|V|\le 15$ and $60$ pairs otherwise; missing pairs default to $0.5$.                                                                                                                                              
  Decoding uses temperature $\tau = 0.0$ and up to $5$ retries; the response is                                                                                                                                      
  parsed by stripping Harmony tags, fenced code blocks, and finally extracting                                                                                                                                       
  the largest \texttt{\{...\}} substring. Decoding is deemed successful if the                                                                                                                                       
  parsed object covers at least $\max(|V|,5)$ pairs. 
  \paragraph{LLM prompt (vLLM logprob path) and yes/no token sets.}                                                                                                                                                  
  For base (non-instruct) vLLM backbones, each pair is queried with a single                                                                                                                                         
  prompt of the form                                                                                                                                                                                                 
  \begin{quote}\small\ttfamily
  Context: \{domain\_description\}\\[2pt]                                                                                                                                                                            
  Question: Does the variable '\{u\}' directly causally influence the variable                                                                                                                                       
  '\{v\}'?\\                                                                                                                                                                                                         
  Answer strictly with a single word (Yes or No):\\                                                                                                                                                                  
  Answer:                                                                                                                                                                                                            
  \end{quote}     
  We then call \texttt{vllm.LLM.generate} with \texttt{SamplingParams\\                                                                                                                                              
  (temperature=0,\ max\_tokens=1,\ logprobs=20)} and read the top-20 token                                                                                                                                           
  log-probabilities at the first generated position. Tokens are lower-cased and                                                                                                                                      
  stripped, then bucketed into the affirmative set                                                                                                                                                                   
  \(\mathcal{T}_{\mathrm{yes}} = \{\text{\texttt{yes}}, \text{\texttt{y}},                                                                                                                                           
  \text{\texttt{true}}, \text{\texttt{1}}\}\)                                                                                                                                                                        
  or the negative set                                                                                                                                                                                                
  \(\mathcal{T}_{\mathrm{no}} = \{\text{\texttt{no}}, \text{\texttt{n}},                                                                                                                                             
  \text{\texttt{false}}, \text{\texttt{0}}\}\).                                                                                                                                                                      
  Letting $S_{\mathrm{yes}}$ and $S_{\mathrm{no}}$ be the summed exponentiated
  log-probabilities over $\mathcal{T}_{\mathrm{yes}}$ and $\mathcal{T}_{\mathrm{no}}$                                                                                                                                
  respectively, we set                                                                                                                                                                                               
  \(p_{u \to v}                                                                                                                                                                                                      
    = S_{\mathrm{yes}} / (S_{\mathrm{yes}} + S_{\mathrm{no}})\)                                                                                                                                                      
  when $S_{\mathrm{yes}} + S_{\mathrm{no}} > 10^{-8}$ and $p_{u \to v} = 0.5$                                                                                                                                        
  otherwise. Probabilities are finally clipped to $[10^{-4},\,1-10^{-4}]$. 
  \paragraph{LLM backbones and caching.}                                                                                                                                                                             
  Default backbone: \texttt{openai/gpt-oss-120b} loaded with                                                                                                                                                         
  \texttt{tensor\_parallel\_size=1}, \texttt{gpu\_memory\_utilization=0.9},                                                                                                                                          
  \texttt{max\_model\_len=4096}, \texttt{trust\_remote\_code=True},                                                                                                                                                  
  \texttt{enforce\_eager=False}. Scaling experiments additionally evaluate                                                                                                                                           
  \texttt{Qwen/Qwen3-4B-Instruct-2507} (chat-JSON path),                                                                                                                                                             
  \texttt{openai/gpt-oss-20b} (logprob path),                                                                                                                                                                        
  \texttt{Qwen/Qwen3-30B-A3B-Instruct-2507} (chat-JSON), and the default 120B                                                                                                                                        
  (logprob). Elicited priors are cached per dataset and per model in                                                                                                                                                 
  \texttt{outputs/edge\_probs\_<dataset>[\_<model>].json} and reused across                                                                                                                                          
  seeds; CPU-only experiments load these caches directly and do not initialise                                                                                                                                       
  vLLM.                                                                                                                                            
  \paragraph{Twins-specific guards.}                                                                                                                                                                                 
  For Twins ($n \approx 71{,}345$, $p = 54$) the uniform-prior comparison would
  otherwise produce graphs with $\sim$715 edges and cause                                                                                                                                                            
  \texttt{dowhy.identify\_effect} to hang. We therefore (i) declare Twins                                                                                                                                            
  ``heavy'' and skip the No-CI-Pruning ablation variant on it (recording NaN),                                                                                                                                       
  and (ii) deterministically truncate uniform-prior graphs to at most 100 edges                                                                                                                                      
  prior to CI pruning, in the order returned by \texttt{G.edges()}. The                                                                                                                                              
  prior-ablation full-$N$ regime additionally subsamples Twins to                                                                                                                                                    
  \texttt{TWINS\_FULL\_CAP $= 5{,}000$} so that the wall clock fits the SLURM                                                                                                                                        
  budget. Real-prior graphs and all other datasets are not affected. 

\paragraph{Broader related efforts.}
Our work is also connected to a broader line of recent reliability-oriented research on trustworthy foundation-model systems. Prior work from our group has studied formal uncertainty, verifiable reasoning, and formal verification for LLM/VLM reasoning \cite{ganguly2025grammars,singh2026verge,singh2026vlm}. Complementary efforts investigate typicality-based reliability, out-of-distribution detection, and anomaly detection \cite{ganguly2025forte,ganguly2026trust,chen2025k4}. Related systems work further extends these ideas to retrieval-augmented generation, intermediate-budget reasoning, automated data curation, and efficient performance modeling for LLM workloads \cite{wang2026hugrag,yang2026midthink,ganguly2025labelingcopilotdeepresearch,zhang2025gpu}.

\end{document}